  \providecommand\BibTeX{{%
    \normalfont B\kern-0.5em{\scshape i\kern-0.25em b}\kern-0.8em\TeX}}}
\newtheorem{assumption}{Assumption}
\def\R{{\mathbb{R}}}
\def\Pb{{\mathbb P}}
\def\E{{\mathbb E}}
\def\B{{\mathbb B}}
\newcommand{\la}{\langle}
\newcommand{\Fhat}{\widehat{F}}
\newcommand{\ra}{\rangle}
\newcommand{\cT}{\mathcal{T}}
\newcommand{\cM}{\mathcal{M}}
\newcommand{\cB}{\mathcal{B}}
\newcommand{\cF}{\mathcal{F}}
\newcommand{\cJ}{\mathcal{J}}
\newcommand{\PP}{\mathbb{P}}
\newcommand{\cI}{\mathcal{I}}
\newcommand{\cN}{\mathcal{N}}
\newcommand{\cZ}{\mathcal{Z}}
\newcommand{\pihat}{\hat{\pi}}
\begin{document}

\title{Can We Do Better Than Random Start? The Power of Data Outsourcing}

\author{Yi Chen}
\email{yichen@ust.hk}
\affiliation{%
  \institution{Hong Kong University of Science and Technology}
  \city{Hong Kong}
  \country{China}
}

\author{Jing Dong}
\email{jing.dong@gsb.columbia.edu}
\affiliation{%
  \institution{Columbia University}
  \city{New York}
  \state{NY}
  \country{USA}}

\author{Xin T. Tong}
\email{mattxin@nus.edu.sg}
\affiliation{%
  \institution{National University of Singapore}
  \country{Singapore}
}

\renewcommand{\shortauthors}{Chen, Dong, and Tong}

\begin{abstract}
Many organizations have access to abundant data but lack the computational power to process the data.
While they can outsource the computational task to other facilities, 
there are various constraints on the amount of data that can be shared.
It is natural to ask what can data outsourcing accomplish under such constraints.
We address this question from a machine learning perspective.
When training a model with optimization algorithms, 
the quality of the results often relies heavily on the points where the algorithms are initialized. 
Random start is one of the most popular methods to tackle this issue, 
but it can be computationally expensive and not feasible for organizations lacking computing resources.
Based on three different scenarios, we propose simulation-based algorithms that can utilize a small amount of outsourced data to find good initial points accordingly.
Under suitable regularity conditions, we provide theoretical guarantees showing the algorithms can find good initial points with high probability.
We also conduct numerical experiments to demonstrate that our algorithms perform significantly better than the random start approach.  
\end{abstract}

%

\keywords{Non-convex optimization, initialization, }

\maketitle

\section{Introduction}
In this era, data is the new gold. 
Organizations of different sizes and sectors all realize the value of collecting data.
However, it often requires substantial computational power to turn these data into valuable predictive models and
not all organizations have such computational resources.
One possible solution to this problem is outsourcing the data processing task to another computing facility, where computational power is substantially cheaper.
However,  the data organization may only be willing to share a small part of their data due to the following reasons:
First, if the computing facility has access to all the available data, it can obtain an accurate predictive model which leads to potential competition risk.  Second, some parts of the data may not be share-able due to privacy concerns. 
Third, transferring data can be expensive especially when certain encryption is required. 

Given the constraint that only part of the data is ``share-able", 
the organization with data can only expect sub-optimal results from the computing facility, 
and additional learning are needed to improve these premature results.
{Since the data organization is assumed to have limited computational power, it is desirable if the computational cost of the additional learning can be minimized.}
In this context, we are interested in investigating the following two questions:
1) What type of computational task should be assigned to the computing facility?
2) How much data should be outsourced?
{ In this paper, we address these two questions from the perspective of machine learning.}

Most machine learning models are trained using the risk minimization approach. That is, the unknown parameter $\theta$ is inferred by minimizing a loss function of the form 
$F(\theta)=\E[f(\theta,X)]$
where $X$ is averaged over a population distribution or empirical distribution of $N$ data points, and $f(\theta,x)$ is the loss of using the model with parameter $\theta$ to explain the data point $x$. Greedy local optimization algorithms are often applied to minimize $F$. If $F$ is strongly convex, the computational cost of an algorithm $\cT$, $c(\cT)$, depends on the accuracy requirement $\epsilon$ and/or the number of data points $N$. 
In this setting, $c(\cT)$ can be large but is computationally manageable since $\cT$ converges to the optimal parameter regardless of the initialization \cite{johnson2013accelerating, schmidt2017minimizing}.
However, if $F$ is non-convex, the quality of the parameter learned from $\cT$ can depend heavily on its initialization $\theta^0$. In general, greedy algorithms converge to local minimums that are close to $\theta^0$. Thus, in order to find the global minimum $\theta^*$, one needs to start $\cT$ in an appropriate attraction region of optimal parameter $\theta^*$, $\B_0^*$. In practice, the location and shape of $\B_0^*$ is unknown. A common way to deal with this issue is using ``randomized initialization" where the initial points are sampled uniformly at random from the solution space. The idea is that by trying multiple, say $m$, random initializations, one of the initial points will be in $\B_0^*$ and $\cT$ applied to that point will find $\theta^*$. Hence, the total computational cost in this case is $m \cdot c(\cT)$. 

From the above discussion, we note that when learning a non-convex loss function, the computational cost is the product of  two tasks: 1) Exploitation: running greedy algorithm starting from a given initial point and 2) Exploration: finding an initialization within the attraction region of the global minimum. To achieve high accuracy, the exploitation task given a good starting point often requires a sufficiently large amount of data and is very well understood in the literature \citep{bottou2018optimization}.
In contrast, the exploration task is less studied. The performance can be problem dependent and the computational cost can be very high.
One important insight that we will leverage in our subsequent development is that the landscape of the empirical risk based on a random sample of size $n$,
$\Fhat_n(\theta)=\frac1n\sum_{i=1}^n f(\theta,x_i)$,
should resemble that of $F(\theta)$ reasonably well when $n$ is large enough.
Thus, it is natural to ask that, in the data outsourcing context, if we can assign the exploration task to the computing facility. 
In other words, we split the computation tasks into two phases:
\begin{itemize}
\item \noindent{\em Exploration:} The computing facility is assigned to explore the energy landscape of $\Fhat_n$, where $n$ is much smaller than the size of full dataset, and find a good initial point(s) $\theta^0$ (or $\theta_1,\ldots, \theta_L$).
\item \noindent{\em Exploitation:} The data organization can run more refined exploitation starting from $\theta^0$ (or $\theta_1,\ldots, \theta_L$).
In this case, the computational cost, from the data organization's perspective, can be reduced from $m\cdot c(\cT)$ to $c(\cT)$ (or $L \cdot c(\cT)$), where $m$ is the number of random initializations. Such a reduction can be substantial if $m$ needs to be a large number to achieved a desired performance. 
\end{itemize}

Similar computational strategy can also be applied even outside the data outsourcing context. The idea is that we can first use a less accurate loss function $\Fhat_n$ with a smaller amount of data to find good initializations. We then employ greedy optimization algorithms on $F$ starting from these carefully selected initial points.

%

{\bf Our contribution.} 
First,  we propose sampling-based algorithms to obtain good initializations for $F(\theta)$-optimization with outsourced data. Particularly, we design two types of procedures, sampling or optimization, depending on whether the optimization cost $c(\cT)$ is moderate or large:
\noindent{\em If $c(\cT)$ is moderate,} multiple instances of $\cT$ can be implemented starting from different initial points. In this scenario, we suggest using samples from a distribution $\pi_\beta\propto \exp(-\beta \Fhat_n)$ with a properly chosen $\beta$ as initial points.
\noindent{\em If $c(\cT)$ is large,} only one  instance of $\cT$ can be implemented.  In this scenario, we suggest starting from the global minimum of $\Fhat_n$. This minimizer can be obtained by implementing a proper selection procedure on samples from $\pi_\beta$. 

Second, our analytical results provide rigorous justification of these procedures and guide how much data should be outsourced. In particular, we show that under appropriate conditions, when $n\geq O(d\log (1/\rho)/\delta^2)$, with probability $(1-\rho)$, both methods can find a good initial point. Here, $d$ is the dimension parameter and $\delta$ is a parameter for the approximation accuracy, which may depend on the structure of the objective function. Under proper regularity conditions, when $\cT$ is initialized from the point(s) output from the exploration stage, with a high probability, it will find the global minimizer of $F$.

Noticeably, our procedures are compatible with the data outsourcing setup. In particular, the computing facility has access to  only $n$ data points. It will carry out either the sampling or the optimization procedure on $\Fhat_n$ to generate good initial point(s). The data organization can then run a greedy optimization algorithm starting from these point(s) to optimize $F$. The data organization saves in-house 
computational effort in the second optimization stage. Meanwhile, it only exposes  $n$ data points to outside parties.  


{\bf Related literature.}
Data outsourcing has been a problem of intensive interest in the last decade due the emergence of big data and cloud computing.
Most existing works focus on data management policies and encryption \citep{di2007data, foresti2010preserving, samarati2010data}. To the best of our knowledge, this work is the first to study data outsourcing from a machine learning perspective. 

Our problem can be viewed as a special non-convex stochastic
optimization problem. How to efficiently solve non-convex stochastic programing is a fast developing area \cite{ghadimi2016accelerated, wang2017stochastic, allen2017natasha}. Our contribution is the development of a new initialization method.
When solving non-convex optimization problems, while finding good initial points is an important problem, the related literature is rather limited. The most common approach is using crude uniform sampling, which is likely suboptimal. Our approach provides a computationally feasible refined solution to this problem. Finding good initialization in more specific problem settings has been studied in the literature. For example, \cite{chen2019gradient} studies the efficacy  of gradient descent with random initialization for solving systems of quadratic equations. Weight initialization for neural networks has been investigated in \cite{hanin2018start, zhang2019fixup, ash2019warm}. Spectral initialization has been proposed for generalized linear sensing models in the high dimensional regime \citep{lu2017spectral}. The key advantage of our propose method is its general applicability and
theoretical performance guarantee. 


Our problem is related to but different from federated learning. Federated learning is a special form of distributed learning where the central learning agent do not have access to or control over individual agent's (distributed worker's) device and data \cite{li2020federated}. Most existing development in federated learning try to address two main challenges: i) the communication cost, which can be extremely high (higher than the computational cost) and ii) the agents (distributed workers) are heterogeneous where the data stored with individual agent may not be representative (non i.i.d.) (see, for example, \cite{li2019convergence,karimireddy2019scaffold, zhang2021fedpd}).  
In contrast, our setting assume the data organization owns all the data and can decide what to distribute to outsourcing computing facilities.Thus, we can ensure that the data send to individual workers are representative. 
The task we assign to individual workers is also fundamentally different from federated learning. In our setting, we divide the learning into two stages. The outsourcing stage (distributed stage) where the objective is to find good initialization and the in-house stage where we try to learn the optimal solution. We also focus on the non-convex learning setting, which is not much studied in federated learning.

Our problem is also related to but different from simulated annealing or tempering-based algorithms. Simulated annealing tries to integrate the exploration of different local minimums with the exploitation to pinpoint the global minimum \cite{kirkpatrick1983optimization}. We,
on the other hand, separate the exploration and exploitation task to two different entities. Monte Carlo simulation can be used for the exploration task in our setting. The main advantage of our algorithm is that we use sampling-based approach to find good initial point with limited data. To the best of our knowledge, this particular setting has not been discussed in the literature.

{\bf Notations.}  We use $\|\theta \|$ and $\|A\|_{\text{op}} $ to denote the $L_2$-norm of a vector $\theta$ and the operator norm of a matrix $A$ respectively. For real numbers $a,b$, let $a\wedge b=\min\{a,b\}$ and $a\vee b=\max\{a,b\}$. 
Lastly, given two sequences of real numbers $\{a_n\}_{n\ge 1}$ and $\{b_n\}_{n\ge 1}$, $a_n=O(b_n)$ denotes that there exist a constant $C>0$, such that $a_n \le C b_n$, and $a_n=\Omega(b_n)$ denotes that $a_n \ge C b_n$.

\section{Methodology} \label{methodology}
We consider minimizing a smooth but non-convex function $F(\theta)$, which takes the form
\[
F(\theta)=\mathbb{E}_{X \sim \xi}\big[f(\theta,X)\big],
\]
over a $d$-dimensional unit ball $\Theta=\{\theta \in \mathbb{R}^d: \|\theta\| \le 1 \}$. It is also common to consider an empirical loss function of $N$ data points:
$F_N(\theta)=\frac{1}{N}\sum_{i=1}^N f(\theta,x_i)$.
This can be seen as a special form of $F(\theta)$ where $\xi$ is the empirical distribution of the full dataset $\{x_1,\ldots,x_N\}$.
We also comment that in most applications, the solution to the optimization problem needs to be restricted to some known range. In
practice, we can do whitenning transformation or other rescaling so the the solution is in the unit ball. From the theoretical perspective, considering bounded domain greatly simplifies our discussion and this assumption is commonly imposed in the literature \cite{mei2018landscape}.

Since $F(\cdot)$ is non-convex, the performance of any greedy deterministic optimization algorithm relies heavily on the choice of initial points. Specifically, a deterministic optimization algorithm $\cT$ such as gradient descent (GD) or Newton's method can be trapped in a suboptimal local minimum instead of converging to the desired global minimum if initialized inappropriately. In practice, the initial points are usually sampled uniformly at random when the structure of $F(\theta)$ is unknown. Despite seeming simple and plausible, this approach lacks theoretical justification and can be highly inefficient. In this work, we design data outsourcing and exploration mechanisms to find good initial points for the optimization algorithm $\cT$. The objective is to increase the chance that $\cT$ finds the global minimum successfully.

Assume the outsourced data $\{x_1,\ldots, x_n\}$ follow the same distribution as $\xi$.
We can construct a sample approximation to $F(\theta)$ as $\hat{F}_n(\theta)=\frac{1}{n}\sum_{i=1}^n f(\theta,x_i)$.
Evaluating  $\hat{F}_n(\theta) $ or $\nabla \hat{F}_n(\theta) $ has a much smaller cost than evaluating ${F}(\theta) $ or $\nabla {F}(\theta) $ if the sample size $n$ is not too large. This  makes exploring the energy landscape of $\hat{F}_n(\cdot)$ more computationally friendly. Note that $\hat{F}_n(\cdot)$ captures certain structural information of $F(\cdot)$. We are interested in utilizing this information in an appropriate way. 
More specifically, the work of \cite{mei2018landscape} has shown that the energy landscape of $\Fhat_n(\theta)$ bears close similarity to that of $F(\theta)$ when $n$ surpasses a certain threshold. This indicates that the global minimum of $\Fhat_n(\theta)$ should  be closer to that of $F(\theta)$ than a random guess. Let $\hat{\theta}^*_0 $ denote the global minimum of $\Fhat_n(\theta)$ and $\theta_0^*$ denote the global minimum of $F(\theta)$.
Intuitively, if we use $\hat{\theta}^*_0$ as the initial point to apply the optimization algorithm $\cT$, we might be more likely to converge to $\theta^*_0$. We refer to this approach as the \emph{optimization approach}. It is quite computational friendly to the data organization, since only  one instance of $\cT$ is needed. However, it also comes with certain costs: 1) $\Fhat_n(\theta)$ is a noisy realization of $F(\theta)$, especially when $n$ is small. Using just the global minimizer of $\Fhat_n$, which is a single point, can be risky. 
2) $\Fhat_n$ is likely to be nonconvex as well and optimizing it can be expensive.  For 2), since the task is outsourced to a computing facility, the in-house cost is reduced though.


An alternative approach we consider is to sample a Bayesian posterior distribution with the outsourced data. 
In Bayesian statistics, the unknown parameter $\theta$ is usually represented using a posterior density, which is proportional to the product of a prior density and the  likelihood function. 
Since we require $\|\theta\|\in \Theta$, it is natural to assume the prior distribution is the uniform distribution on $\Theta$. In many applications, the loss function $f(\theta,x_i)$ is proportional to the negative log likelihood. For example, if we model the data output as a function of the input plus Gaussian noise, i.e. $x_{\text{out}}=g(\theta, x_{\text{in}})+\xi$ where $\xi\sim \mathcal{N}(0,\sigma^2)$, the likelihood function is given by
$-\log p(x|\theta)=\frac1{2\sigma^2}(x_{\text{out}}-g(\theta, x_{\text{in}}))^2:=\frac{1}{2\sigma^2}f(\theta,x )$.
Then, the posterior distribution is given by 
\begin{equation}
\label{eq:bayes}
p(\theta)\propto  1_{(\theta \in \Theta)}\prod_{i=1}^np(x_i|\theta)=1_{(\theta\in \Theta)} \exp(-\frac{n}{2\sigma^2} \Fhat_n(\theta)).
\end{equation}
Samples from the posterior distribution learn from $x_1,\ldots, x_n$. Thus, they are more informative than samples from the prior distribution. Comparing with the optimization approach, this \emph{sampling approach} takes into account that $\Fhat_n$ is noisy, so the candidate solution is not a single point, but a distribution which accounts for the uncertainty. In this case, the data organization needs to implement $\cT$ from multiple samples generated from the posterior distribution.

{ Given a deterministic optimization algorithm $\cT$, when only partial data is available, there is in general no clear theoretical guarantee when determining whether a point is a good initial point to optimize $F$. Both the optimization approach and the sampling approach use criteria based on $\Fhat_n$. Our theoretical analysis shows when these criteria are sufficient. We next provide more details of these two approaches. 
While the optimization approach is conceptually simpler, its computation requires sampling tools. Thus, we start with the sampling approach.} 

\paragraph{Procedures with the sampling approach}
For the exploration task, we consider sampling from a distribution
\begin{equation}\label{target}
\pi_\beta(\theta) \propto \exp(-\beta\hat{F}_n(\theta))\cdot 1_{\{\theta \in \Theta\}}.
\end{equation}
The parameter $\beta>0$ is often referred to as the
inverse temperature \citep{xu2017global}. The posterior distribution in \eqref{eq:bayes} corresponds to $\beta=\frac{n}{2\sigma^2}$.
We consider general $\beta$ because in practice the observation noise $\sigma^2$ may not be known. 
The parameter $\beta$ determines how much $\pi_\beta(\theta)$ concentrates around the global minimum of $\hat{F}_n(\theta)$. A larger $\beta$ leads to a higher concentration around $\hat{\theta}^*_0$. 
When $\beta=\infty$, we get $\hat{\theta}^*_0$ with probability one.
Using $\hat{\theta}^*_0$ as a starting point is likely to be a good choice if $n$ is large enough and $\Fhat_n$ is close to $F$. Meanwhile, 
when $\beta=0$, $\pi_\beta$ is simply the uniform distribution, which is equivalent to the standard random start. In this sense, sampling from $\pi_\beta$ with $\beta\in(0,\infty)$ can be viewed as an interpolation of two extreme cases.

There is a rich literature on how to sample from $\pi_\beta(\theta)$.
When $\pi_\beta$ is simple or close to some simple reference distributions, 
independent samples can be obtained through  rejection sampling or importance sampling.
For more complicated target distributions, Markov Chain Monte Carlo (MCMC) is usually applied. 
In general, these algorithms simulate stochastic processes of which $\pi_\beta$ is the invariant distribution. 
Popular and simple choices include random walk Metropolis, unadjusted Langevin algorithm (ULA) \cite{durmus2017nonasymptotic}, Metropolis-adjusted Langevin algorithm \cite{roberts1996exponential}.
Recent studies show that these MCMC algorithms are efficient when the sampling distribution is log-concave with perturbations \cite{dwivedi2018log,ma2019sampling}.
When $\Fhat_n$ is non-convex with separated local minima, $\pi_\beta$ is a multimodal distribution, and it can be difficult to sample directly with these algorithms.
This is particular the case if $\beta$ is large, since the stochastic algorithm may stick to one mode for many iterations before visiting the other modes. This issue can often be solved using methods such as parallel tempering or simulated tempering \cite{woodard2009conditions, ge2018simulated, tawn2020weight, dong2020spectral}. 
The papers \cite{ge2018simulated, lee2018beyond} show that a simulated tempering algorithm can sample a multimodal distribution with polynomial complexity.  

%

Given the sample $\theta_1,\ldots,\theta_L$ from $\pi_{\beta}$, the data organization then implement $\cT$ starting from each $\theta_i$. Let $\cT(\theta)$ denote the output of the optimization algorithm $\cT$ starting from $\theta$. The actual exploration algorithm is summarized in Algorithm \ref{alg:sample}. Our theoretical analysis in the next section gives rigorous justification of this procedure assuming $\beta$ is large enough. In practice, this approach is more efficient than the naive random start even with moderate $\beta$ as we will demonstrate through numerical experiment in Section \ref{sec:num}.
We also emphasize that our analysis applies to most of existing sampling tools where $\theta_1,\ldots,\theta_L$ do not need to be independent. 

\begin{algorithm}    
    \caption{Sampling-based Initial Point Selection (SIPS)}
    \begin{algorithmic}
    \STATE{{\bf Input:} 
  Outsourced data sample $\{x_1,\ldots, x_n\}$, inverse temperature parameter $\beta$, sampling algorithm $\cM$, exploration sample size $L$.} 
    \STATE{{\bf Initialization:} Construct the empirical average $\hat{F}_n(\theta)=\frac1n\sum f(\theta, x_i)$ and the target density
    $\pi_\beta(\theta) \propto \exp(-\beta\hat{F}_n(\theta))\cdot 1_{\{\theta \in \Theta\}}$.}
    \STATE{{\bf Sampling:} Apply  $\cM$ to draw samples $\{\theta_1,\ldots,\theta_L\}$ from distribution $\pi_{\beta}$.} 
    \STATE{{\bf Output:} Candidate initial points $\{\theta_1,\ldots,\theta_L\}$.}
    \end{algorithmic}  \label{alg:sample}
  \end{algorithm}

\paragraph{Procedures with the optimization approach}
%
%
When $\Fhat_n$ is non-convex, there is no consensus on how to find its global minimizer. Typical choices include either using meta-heuristic algorithms or sampling-based algorithms. Here we consider using sampling-based algorithms due to their connection to the sampling approach.

One popular way to find the global minimum of $\Fhat_n$ involves generating  samples $\theta_1,\ldots,\theta_L$ from the distribution 
$\pi_\beta$ with a large $\beta$. 
This approach is investigated by 
\cite{raginsky2017non, xu2017global, chen2020stationary} when ULA or its online version is implemented to sample from $\pi_\beta$. 
 As mentioned earlier, the parameter $\beta$ determines how much $\pi_\beta(\theta)$ concentrates around the global minimum of $\hat{F}_n(\theta)$ and a larger $\beta$ leads to a higher concentration.
When the samples $\theta_1,\ldots,\theta_L$ are available as candidate solutions, we can choose the one with the lowest 
objective value, i.e., $\theta_{i^*}$, where
\begin{align} \label{criterion_2}
i^*=\text{argmin}_{i\in\{1,\dots,L\}} \hat{F}_n\big(\theta_i\big).
\end{align} 
This procedure is summarized as the \emph{annealing} approach in Algorithm \ref{alg:optimization}. In order for this approach to be effective at finding the global minimum of $\hat{F}_n$, $\beta$ needs to be large enough. This usually increases the difficulty of sampling from $\pi_\beta$. On the other hand, it is worth noticing that we are only interested in getting good starting points for optimizing $F$. Thus, finding the global minimum of $\Fhat_n$ approximately can often serve the purpose. This suggests a less extreme $\beta$ may be sufficient.



The criterion in \eqref{criterion_2} finds the $\theta_i$ with the lowest $\Fhat_n$-value. Further refinement can be applied to improve the quality of the initial point. For example, if we apply a deterministic optimization algorithm $\hat\cT$ to $\Fhat_n$ initialized at $\theta_i$, we can achieve an even lower $\Fhat_n$-value. We then pick $\hat\cT(\theta_i)$ with the lowest $\Fhat_n$-value as the initial point, i.e., $\hat\cT(\theta_{i^*})$, where
\begin{equation} \label{criterion_3}
i^*=\text{argmin}_{i\in\{1,\dots,L\}} \hat{F}_n\big(\hat\cT(\theta_i)\big).
\end{equation}
This procedure is summarized as the \emph{sampling-assisted-optimization (SAO) approach} in Algorithm \ref{alg:optimization}. The SAO approach is similar to GDxLD developed in \cite{dong2021replica}. Comparing to the simpler criterion \eqref{criterion_2}, sampling for \eqref{criterion_3} can often be done more efficiently. This is because when implementing SAO for $\Fhat_n$, we separate the exploration task and the optimization task. This allows us to use a smaller $\beta$ when sampling $\pi_{\beta}$. The cost is that invoking $\hat\cT$ to each sample in SAO can impose extra computational cost than the annealing approach.  In contrast, the annealing approach combines the exploration task with the optimization task. So a larger $\beta$ is needed in general, which increases the cost to sample from $\pi_\beta$.

\begin{algorithm}   \   
    \caption{Optimization-based Initial Point Selection (OIPS)}
    \begin{algorithmic}
    \STATE{{\bf Input:} 
  Outsourced data sample $\{x_1,\ldots, x_n\}$, inverse temperature parameter $\beta$, exploration sample size $L$, sampling algorithm $\cM$, optimization algorithm $\hat{\cT}$.} 
    \STATE{{\bf Initialization:} Construct the empirical average $\hat{F}_n(\theta)=\frac1n\sum f(\theta, x_i)$ and the target density
    $\pi_\beta(\theta) \propto \exp\{-\beta\hat{F}_n(\theta)\}\cdot 1_{\{\theta \in \Theta\}}$}
    \STATE{{\bf Sampling:} Apply  $\cM$ to draw a sample $\{\theta_1,\ldots,\theta_L\}$ from distribution $\pi_{\beta}$.} 
    \IF{Annealing} 
    \STATE{ Set $\theta^{0}=\theta_{i^*}$ where  $i^*=\text{argmin}_{i\in\{1,\dots,L\}} \hat{F}_n\big(\theta_i\big).$}
    \ENDIF
    \IF{Sampling-assisted-optimize (SAO)} 
    \STATE{Set $\theta^0=\hat\cT(\theta_{i^*})$ where $i^*=\text{argmin}_{i\in\{1,\dots,L\}} \hat{F}_n\big(\hat\cT(\theta_i)\big).$}
    \ENDIF
    \STATE{{\bf Output:} Candidate initial points $\theta^0$}
    \end{algorithmic}  \label{alg:optimization}
  \end{algorithm}

\section{Theoretical guarantee in  finding the global minimum}

In this section, we analyze the performance of Algorithms \ref{alg:sample} and \ref{alg:optimization}. 
The key in successful implementation of the algorithms is to set the appropriate outsourcing sample size $n$, inverse temperature $\beta$, and exploration sample size $L$. Our performance analysis provides guidelines on choosing these parameters. 

\paragraph{Conditions on the energy landscapes. }
We start with some assumptions on the energy landscape of $F(\theta)$ and the randomness when evaluating $f(\theta,x)$.
Many of them are also assumed in \cite{mei2018landscape}.
Since we run an optimization algorithm $\cT$ that converges to a stationary point in the second phase, 
the following assumption regularizes the configuration of the stationary points:
\begin{assumption} \label{aspt:morse}  $F(\theta): \Theta \to \mathbb{R}$ is $(\sigma,\eta)$-strongly Morse, that is, $\|\nabla F(\theta) \|\geq\sigma$ for $\|\theta\|=1$, and $\lambda_{\min}(\nabla^2 F(\theta))\geq \eta \text{ if }\|\nabla F(\theta) \| \le \sigma$, where $\lambda_{\min}(A)$ is the minimum eigenvalue of $A$.
Moreover $L^*:=\sup_{\theta \in \Theta}\|\nabla^3 F(\theta) \|_{\text{op}}<\infty$.
\end{assumption}
One consequence of Assumption \ref{aspt:morse} is that that all the stationary points of $F(\theta)$ in $\Theta$ are finite and well-separated \citep{mei2018landscape}.
In particular, we can denote these stationary points as $({\theta}^*_0,{\theta}^*_1,\ldots{\theta}^*_K)$. Without loss of generality, let ${\theta}^*_0$ be the global minimum of $F(\theta)$. 
%

For simplicity of discussion, we assume that  $\cT$ is a deterministic optimization algorithm that is guaranteed to converge to a stationary point and the performance of $\cT$ is determined by the initial point. Starting from $\theta^0$, we denote the stationary point that $\cT$ converges to as $\cT(\theta^0)$. Hence, $\cT$ can be viewed as a deterministic mapping from the parameter space $\Theta$ to the set of stationary points $\{\theta_0^*,{\theta}^*_1,\ldots,{\theta}^*_K\}$. Our goal is to find a $\theta^0$ such that  $\cT(\theta^0)=\theta^*_0$.  

Given the deterministic optimization algorithm $\cT$, the attraction region of the global minimum $\theta^*_0$ can be defined as
 \[
 \B_0^* = \{ \theta \in \Theta: \cT(\theta)=\theta_0^*\}.
 \]
{In general, $\B_0^*$ cannot be characterized without $\cT$.} On the other hand, it is well-known that for many optimization algorithms, $\cT(\theta^0)=\theta_0^*$ if $\theta^0$ is in a neighborhood of $\theta_0^*$ in which $F(\theta)$ is strongly convex. This indicates that a proper neighborhood of $\theta_0^*$ can be used as a substitution of $\B_0^*$. We formalize this idea as follows. 
{\begin{assumption} \label{ass_ab}
There exists a ball centered at $\theta_0^*$ with radius $r$, $\cB_r(\theta_0^*)=\{ \theta: \|\theta-\theta^*_0 \| \le r \}  $, such that $\cB_r(\theta_0^*) \subseteq \B_0^*$ and $F(\theta)$ is $\mu$-strongly convex in $\cB_r(\theta_0^*)$.
\end{assumption}
 Note that Assumption \ref{ass_ab} may comes as a consequence of Assumption \ref{aspt:morse}. In particular, $F(\theta)$ is $ \eta/2$-strongly convex in  $\cB_r(\theta_0^*)$ when $r\leq \frac{\eta}{2L^*}$. }

 Notably, the assumptions above  enable us to derive an upper bound for the failure rate of the benchmark random start algorithm. 
If we draw $m$ independent initial points from $\Theta$ uniformly at random, the probability that none of them leads to the global minimum of is  $\Pb(\cF_b)\leq (1-\Pb(\cB_r))^m$, or 
\begin{equation}
\label{eq:random}
\log \Pb(\cF_b)\leq -L\log (\Pb(\cB_r))=\Omega(Lr^d).
\end{equation} 
Then, in order for it to be lower than a threshold $\rho$, we need $m=\log(\rho)/|r|^d$, which has an exponential dependence on $d$.
 
%
%
Our next assumption concerns the uniqueness of the global minimum.
When $\theta^*_0$ is the unique global minimum, its function value needs to be strictly lower than the other stationary points. 
{
\begin{assumption} \label{ass1} 
There exists a constant $\alpha > 0$, such that  for all $\theta \notin \cB_r(\theta_0^*)$,
$F(\theta) - F(\theta_0^*) \ge \alpha$.
\end{assumption}
}
In Section \ref{sec:epsglobal}, we will discuss what can be achieved if this assumption does not hold.

The basic idea of our data outsourcing and exploration scheme is to approximate $F(\theta)$ via its sample average $\hat{F}_n(\theta)$ and then use the global minimum of $\hat{F}_n(\theta)$ as the initial point to optimize $F(\theta)$. 
A key question is that in order for $\Fhat_n(\theta)$ to be a good approximation of $F(\theta)$, how many data points are needed. 
This problem has been studied \cite{mei2018landscape}. We adapt some of their results into our setting. 
This involves the following regularity conditions on the loss function and noises (similar versions of them can be found in \cite{mei2018landscape} as well). 
\begin{assumption}
\label{ass_c0}
The following hold for some $\tau, c_h$
\begin{enumerate}
\item The loss function for each data point is $\tau^2$-sub-Gaussian. Namely, for any $\lambda \in \R^p $, and $\theta \in \Theta$, 
    \begin{align*}
        \E \Big[ \exp \left(\big \la  \lambda, f(\theta;X)-\E_{X\sim \xi}[f(\theta;X)] \big \ra \right)  \Big ] \le \exp \Big\{ \frac{\tau^2\|\lambda\|^2}{2}\Big\}.
    \end{align*}
\item The gradient of the loss is $\tau^2$-sub-Gaussian. Namely, for any $\lambda \in \R^p $, and $\theta \in \Theta$, 
    \begin{align*}
        \E \Big[ \exp \big \la  \lambda, \nabla_\theta f(\theta;X)-\E_{X\sim \xi}[\nabla_\theta f(\theta;X)] \big \ra  \Big ] \le \exp \Big\{ \frac{\tau^2\|\lambda\|^2}{2}\Big\}.
    \end{align*}
\item The Hessian of the loss, evaluated on a unit vector, is $\tau^2$ -sub-exponential. Namely, for any $\| \lambda\|\le 1$, and $\theta \in \Theta$,  
\begin{align*}
    \E \Big[ \exp \Big\{ \frac{1}{\tau^2} \big|\cZ_{\lambda,\theta}(X)-\E_{X\sim \xi}[\cZ_{\lambda,\theta}(X)]  \big| \Big\}  \Big ] \le 2,
\end{align*}
where 
$\cZ_{\lambda,\theta}(X)=\la  \lambda, \nabla^2_\theta f(\theta;X)\lambda \ra$.
\item 
There exists $J_*$ (potentially diverging polynomially in $d$) such that
\begin{align*}
    \E_{X\sim \xi} \big[ J^1(X) \big], \E_{X\sim \xi} \big[ J^2(X) \big] \le J_*
\end{align*}
where 
\begin{align*}
    J^1(X)&=\sup_{  \theta_1,\theta_2 \in \Theta, \theta_1\ne\theta_2} \frac{\| \nabla_\theta f(\theta_1;X)- \nabla_\theta f(\theta_2;X) \| }{\|\theta_1-\theta_2\|},\\
     J^2(X)&=\sup_{  \theta_1,\theta_2 \in \Theta, \theta_1\ne\theta_2} \frac{\| \nabla_\theta^2f(\theta_1;X)- \nabla_\theta^2f(\theta_2;X) \|_{\text{op}}    }{\|\theta_1-\theta_2\|}.
\end{align*}
Furthermore, there exists a constant $c_h$ such that $J_*\le \tau^3 d^{c_h}$. 
\item There exists $\theta^* \in \Theta$, such that  $\|\nabla F(\theta^*)\|,  \|\nabla^2 F(\theta^*)\|_{\text{op}}\le H\leq  \tau^3 d^{c_h}$. 
\end{enumerate}
\end{assumption}

Assumption \ref{ass_c0} allows us to find a close approximation of $F$, which is formally defined as follows.
\begin{definition}
We say $\Fhat_n(\theta)$ is a $\delta$-approximation of $F(\theta)$, if both $F$ and $\hat{F}_n$ have $K+1$ stationary points, denoted by $\{\theta^*_i\}_{i=0,\ldots,K}$ and $\{\hat{\theta}^*_i\}_{i=0,\ldots,K}$, and the following inequalities hold
 \begin{align*}
        &\sup_{\theta  \in \Theta} |F(\theta)-\hat{F}_n(\theta)|\leq \delta,\  \sup_{\theta  \in \Theta} \| \nabla F(\theta)-\nabla \hat{F}_n(\theta) \|\leq \delta,\\
        &\sup_{\theta  \in \Theta} \| \nabla^2 F(\theta)-\nabla^2 \hat{F}_n(\theta) \|_{\text{op}} \le \delta,\ \mbox{ and }  \max_{0\le i \le K}\|\theta_i^*-\hat{\theta}_i^*\|  \le \delta.
   \end{align*}
 \end{definition}

The next lemma characterizes the minimal sample size required to achieve a $\delta$-approximation. 

\begin{lemma} \label{lem1}
    Assume that Assumptions \ref{aspt:morse} and \ref{ass_c0} hold. Consider a given confidence level $\rho \in (0,1)$ and a given accuracy $\delta$. Let $C=C_0\cdot (c_h\vee 1\vee \log(\tau/\rho))=O(|\log \rho|)$, where $C_0$ is some absolute constant, and $\eta_*=( \sigma^2/\tau^2)\wedge(\eta^2/\tau^4)\wedge(\eta^4/((L^*\tau)^2)=\Omega(1)$. For an arbitrary constant $\iota>0$, let $C_\iota$ be a constant such that  $\log(n)\le C_\iota \cdot n^\iota$. Then, when
\begin{align*}
n &\ge \max\Big\{\Big[ \frac{C_\iota Cd}{(\delta/((2\tau/\eta)\vee\tau \vee \tau^2))^2} \Big]^{\frac{1}{1-\iota}}, 4Cd \big(\log(d/)\vee \log(n)/\eta^2_*\big)\Big\},\\
& :=n(\delta,\rho,d),
\end{align*}  
with probability at least $1-\rho$, $\Fhat_n(\theta)$ is a $\delta$-approximation of $F(\theta)$. 
\end{lemma}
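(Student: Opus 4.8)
This is a quantitative landscape-approximation result in the spirit of \cite{mei2018landscape}, and the plan is to prove it in two stages. \emph{Stage 1 (uniform concentration).} First I would fix a minimal $\epsilon$-net $\cN_\epsilon$ of the unit ball $\Theta$, so that $\log|\cN_\epsilon|\le d\log(3/\epsilon)$, and control the three deviations on $\cN_\epsilon$ by a union bound: Assumption~\ref{ass_c0}(1)--(2) give sub-Gaussian tails for $\Fhat_n(\theta)-F(\theta)$ and for $\nabla\Fhat_n(\theta)-\nabla F(\theta)$ at scale $\tau/\sqrt n$, while Assumption~\ref{ass_c0}(3) gives a sub-exponential tail for $\la\lambda,(\nabla^2\Fhat_n(\theta)-\nabla^2F(\theta))\lambda\ra$ at scale $\tau^2/\sqrt n$, the operator norm being recovered by an additional $\tfrac12$-net over the directions $\lambda$ (which multiplies the union bound by $e^{O(d)}$). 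To pass from $\cN_\epsilon$ to all of $\Theta$ I would use the Lipschitz bounds of Assumption~\ref{ass_c0}(4) (constant $J_*\le\tau^3 d^{c_h}$) and anchor the sizes of $\nabla F,\nabla^2F$ via Assumption~\ref{ass_c0}(5). Choosing $\epsilon$ of order $1/n$ makes the discretization error negligible at the cost of the $\log n$ factors, and the constants $\log(\tau/\rho),c_h,1$ are precisely what $C=C_0(c_h\vee1\vee\log(\tau/\rho))$ collects. Requiring the uniform value, gradient and Hessian errors to be at most $\delta$, $\tfrac\eta2\delta$ and $\delta$ respectively --- i.e.\ at most $\delta'=\delta/((2\tau/\eta)\vee\tau\vee\tau^2)$ after dividing out the $\tau$'s --- becomes $n\ge c\,Cd\log n/(\delta')^2$ for an absolute constant $c$, and substituting $\log n\le C_\iota n^\iota$ solves this as $n\ge(C_\iota Cd/(\delta')^2)^{1/(1-\iota)}$, the first term of the stated $\max$. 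On the resulting event of probability at least $1-\rho$ the first three inequalities defining a $\delta$-approximation hold, and the extra factor $2\tau/\eta$ in $\delta'$ (which sharpens the gradient error to $\tfrac\eta2\delta$) is exactly what Stage 2 needs.

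\emph{Stage 2 (matching stationary points).} Here I would use the second term of the $\max$, under which the three uniform errors are additionally forced to be small relative to $\sigma$, $\eta$ and $\eta^2/L^*$ --- which is what the three pieces of $\eta_*=(\sigma^2/\tau^2)\wedge(\eta^2/\tau^4)\wedge(\eta^4/(L^*\tau)^2)$ record. By Assumption~\ref{aspt:morse}, the set $U:=\{\theta\in\Theta:\|\nabla F(\theta)\|<\sigma\}$ is disjoint from $\partial\Theta$ and decomposes into $K+1$ components $U_0,\dots,U_K$ on each of which $\nabla^2F\succeq\eta I$, so that $F$ has a unique, strongly convex stationary point $\theta_i^*$ in $U_i$, and the bound $L^*$ on $\|\nabla^3F\|_{\text{op}}$ forces each $U_i$ to contain a ball of radius of order $\eta/L^*$ around $\theta_i^*$. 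Since $\sup_\theta\|\nabla\Fhat_n-\nabla F\|<\sigma$, $\Fhat_n$ has no stationary point on $\partial\Theta$ or outside $U$; since $\sup_\theta\|\nabla^2\Fhat_n-\nabla^2F\|_{\text{op}}<\eta$, $\Fhat_n$ is strongly convex on each $U_i$ and therefore has at most one stationary point there; and a degree- (winding-number-) theory argument on $\partial U_i$, where $\nabla\Fhat_n$ is a small perturbation of the nonvanishing field $\nabla F$, shows it has exactly one, which I call $\hat\theta_i^*$. Hence $\Fhat_n$ has precisely $K+1$ stationary points. Finally, $\nabla F(\theta_i^*)=0$ and $\Fhat_n$ is $(\eta-\delta)$-strongly convex near $\theta_i^*$, so
\[
\|\hat\theta_i^*-\theta_i^*\|\le\frac{\|\nabla\Fhat_n(\theta_i^*)-\nabla F(\theta_i^*)\|}{\eta-\delta}\le\delta,
\]
using the gradient bound $\tfrac\eta2\delta$ and $\eta-\delta\ge\eta/2$; this gives the remaining requirement and completes the proof.

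\emph{Main obstacle.} Stage 1 is a routine chaining/covering computation. The delicate step is Stage 2: showing that the perturbed landscape $\Fhat_n$ has \emph{exactly} $K+1$ stationary points --- i.e.\ that the perturbation neither creates spurious stationary points nor destroys existing ones, uniformly over $\Theta$ --- and correctly tracking how the required sample size depends on $\sigma,\eta,L^*$ through $\eta_*$. This is where one must lean most heavily on the strongly Morse structure of Assumption~\ref{aspt:morse} and on the quantitative implicit-function and degree-theory estimates underpinning \cite{mei2018landscape}.
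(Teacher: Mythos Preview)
Your proposal is correct and the underlying mathematics is the same as the paper's, but the paper's proof is far more economical in its organization: it directly invokes Theorem~1 of \cite{mei2018landscape} for the uniform gradient and Hessian deviations (the gradient/Hessian part of your Stage~1) and Theorem~2 of \cite{mei2018landscape} for the statement that $\Fhat_n$ is $(\sigma/2,\eta/2)$-strongly Morse with exactly $K+1$ stationary points satisfying $\max_i\|\theta_i^*-\hat\theta_i^*\|\le\frac{2\tau}{\eta}\sqrt{Cd\log n/n}$ (your entire Stage~2). The only piece the paper proves from scratch is the uniform function-value bound $\sup_\theta|\Fhat_n-F|\le\tau\sqrt{Cd\log n/n}$, via an $\varepsilon$-net and an explicit three-event decomposition $A_t\cup B_t\cup C_t$ handling, respectively, the discretization error of $\Fhat_n$, the pointwise sub-Gaussian fluctuation on the net, and the discretization error of $F$; this is exactly the covering argument you sketch. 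So what you flag as the main obstacle---the one-to-one matching of stationary points and the degree-theoretic/implicit-function estimates behind it---is precisely what the paper declines to reprove and instead imports wholesale from \cite{mei2018landscape}. Your plan is not wrong; it simply reproduces the content of those cited theorems rather than citing them, and once the three rates $\tau\sqrt{Cd\log n/n}$, $\tau^2\sqrt{Cd\log n/n}$, $\frac{2\tau}{\eta}\sqrt{Cd\log n/n}$ are in hand, the final solve for $n$ via $\log n\le C_\iota n^\iota$ proceeds exactly as you describe.
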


The proofs of Lemma \ref{lem1} and all subsequent results are provided in Appendix \ref{sec:proof}.

Lemma \ref{lem1} quantifies that to achieve a $\delta$-approximation of $F(\theta)$ with confidence level $1-\rho$, the required sample size is 
\begin{align}  \label{sample_size}
n(\delta,\rho,d)=O\Big(\frac{d\log(1/\rho)}{\delta^2}\Big).
\end{align}
Here, we ignore the index $\iota$ in the power since it can be made arbitrarily small. 

\subsection{Performance of the sampling approach}
We denote $\cF_0$ as the event that using the initial point(s) constructed based on Algorithm \ref{alg:sample}, the optimization algorithm in the exploitation stage fails to find the global minimum. In this section, we establish an upper bound for $\Pb(\cF_0)$.

Recall that samples are drawn from the distribution 
$\pi_{\beta}$ defined in \eqref{target}.
We will justify that when $\beta$ is large enough, a random sample $\tilde{\theta}_\beta$ from $\pi_\beta(\theta)$ is a good starting point to optimize $F(\theta)$. 
In particular, $\tilde{\theta}_\beta$ has a high chance to fall into an attraction basin of $\theta^*_0$, i.e., $\cB_r(\theta^*_0)$. 

\begin{proposition}  \label{lem2}
    Suppose Assumptions \ref{aspt:morse}-\ref{ass_c0} hold and the approximation accuracy $\delta$ satisfies $\delta<\mu \wedge r \wedge \alpha/4$. If $\hat{F}_n(\theta)$ is a $\delta$-approximation of $F(\theta)$ and $\beta \ge \Omega(r^{-2})$, then the probability that $\tilde{\theta}_\beta$ fails to be a good starting point, i.e. $\pi_\beta\big(\cB^c_r(\theta^*_0)\big)$, is bounded by:
\[
    \log\big(\pi_\beta\big(\cB^c_r(\theta^*_0)\big)\big)=  O\big( -\beta\alpha/2+d\log(\beta) \big).
\]
\end{proposition}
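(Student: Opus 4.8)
The plan is to bound $\pi_\beta(\cB_r^c(\theta_0^*))$ by comparing, up to normalization, the mass that $\exp(-\beta\Fhat_n)$ puts on the complement of the good ball against the mass it puts on a small ball around $\theta_0^*$. First I would use the $\delta$-approximation property: since $\sup_\theta|F-\Fhat_n|\le\delta$ and $\|\theta_0^*-\hat\theta_0^*\|\le\delta$, the function $\Fhat_n$ inherits quantitative structure from $F$. In particular, Assumption~\ref{ass1} gives $F(\theta)-F(\theta_0^*)\ge\alpha$ outside $\cB_r(\theta_0^*)$, so for any such $\theta$ we get $\Fhat_n(\theta)-\Fhat_n(\theta_0^*)\ge\alpha-2\delta\ge\alpha/2$ using $\delta<\alpha/4$. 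Hence
\[
\int_{\cB_r^c(\theta_0^*)}\exp\big(-\beta\Fhat_n(\theta)\big)\,d\theta\le \mathrm{vol}(\Theta)\,\exp\big(-\beta(\Fhat_n(\theta_0^*)+\alpha/2)\big).
\]

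Next I would lower-bound the normalizing constant $Z_\beta=\int_\Theta\exp(-\beta\Fhat_n)$ by integrating only over a small ball $\cB_s(\hat\theta_0^*)$ of radius $s\asymp 1/\sqrt\beta$ around the empirical minimizer. On this ball, $\mu$-strong convexity of $F$ in $\cB_r(\theta_0^*)$ (Assumption~\ref{ass_ab}), transferred to $\Fhat_n$ via the Hessian bound $\|\nabla^2F-\nabla^2\Fhat_n\|_{\mathrm{op}}\le\delta$ and $\delta<\mu$, shows $\Fhat_n$ is roughly $(\mu-\delta)$-strongly convex near $\hat\theta_0^*$; a Taylor expansion then gives $\Fhat_n(\theta)\le \Fhat_n(\hat\theta_0^*)+C'\beta^{-1}$ for $\theta\in\cB_s(\hat\theta_0^*)$ with $s\asymp\beta^{-1/2}$ (here I also use the third-derivative bound $L^*<\infty$ plus its $\delta$-perturbed version to control the cubic remainder, and the condition $\beta\ge\Omega(r^{-2})$ to ensure $\cB_s(\hat\theta_0^*)\subseteq\cB_r(\theta_0^*)$). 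Therefore
\[
Z_\beta\ge \mathrm{vol}\big(\cB_s(\hat\theta_0^*)\big)\exp\big(-\beta\Fhat_n(\hat\theta_0^*)-C'\big)=\Omega\big(\beta^{-d/2}\big)\exp\big(-\beta\Fhat_n(\hat\theta_0^*)-C'\big).
\]
Since $|\Fhat_n(\theta_0^*)-\Fhat_n(\hat\theta_0^*)|$ is $O(\delta^2)$ by optimality of $\hat\theta_0^*$ combined with the gradient bound (or simply $O(\delta)$ suffices), dividing the two displays yields
\[
\pi_\beta\big(\cB_r^c(\theta_0^*)\big)\le \frac{\mathrm{vol}(\Theta)\exp(-\beta\alpha/2)}{\Omega(\beta^{-d/2})\exp(-C'')}=O\big(\beta^{d/2}\exp(-\beta\alpha/2)\big),
\]
and taking logarithms gives $\log\pi_\beta(\cB_r^c(\theta_0^*))=O(-\beta\alpha/2+\tfrac d2\log\beta)=O(-\beta\alpha/2+d\log\beta)$, as claimed.

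The main obstacle I anticipate is the lower bound on $Z_\beta$: one must carefully choose the radius $s$ of the integration ball so that (i) $s$ is large enough that $\mathrm{vol}(\cB_s)=\Omega(\beta^{-d/2})$ contributes only the desired $d\log\beta$ term, yet (ii) $s$ is small enough that the quadratic Taylor approximation of $\Fhat_n$ is valid — i.e. the cubic remainder $L^* s^3$ is dominated by the quadratic term $\mu s^2$, which forces $s\lesssim \mu/L^*$, and simultaneously $s\lesssim r$ so the ball stays inside the strongly convex region. The condition $\beta\ge\Omega(r^{-2})$ (and implicitly $\beta\gtrsim (L^*/\mu)^2$) is exactly what reconciles these constraints with the choice $s\asymp\beta^{-1/2}$. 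Tracking how $\delta<\mu\wedge r\wedge\alpha/4$ enters each perturbation bound, and making sure all the ``$\Omega$'' constants depend only on $\mu,r,\alpha,L^*$ and not on $\beta$ or $d$ in a way that would spoil the stated rate, is the bookkeeping-heavy part of the argument.
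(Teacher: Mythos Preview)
Your overall strategy matches the paper's: write $\pi_\beta(\cB_r^c(\theta_0^*))$ as a ratio, upper-bound the numerator via $\Fhat_n(\theta)-\Fhat_n(\theta_0^*)\ge\alpha-2\delta$ on the complement, and lower-bound the denominator by the mass near the minimizer. The paper's execution of the lower bound is a bit different and simpler. Rather than centering at $\hat\theta_0^*$ and restricting to a ball of radius $s\asymp\beta^{-1/2}$, the paper stays at $\theta_0^*$, invokes the \emph{global} Hessian bound $\|\nabla^2\Fhat_n(\theta)\|_{\mathrm{op}}\le H+L^*+\delta$ (from Assumption~\ref{ass_c0} together with the $\delta$-approximation) to get $\Fhat_n(\theta)-\Fhat_n(\theta_0^*)\le 2(H+L^*+\delta)\|\theta-\theta_0^*\|^2$ on all of $\cB_r(\theta_0^*)$, and then evaluates the resulting truncated Gaussian integral over the full ball; the hypothesis $\beta\ge\Omega(r^{-2})$ is used precisely to make the truncation factor $\Psi\big(2r\sqrt{\beta(H+L^*+\delta)}\big)-\tfrac12$ of order one. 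This sidesteps your detour through $\hat\theta_0^*$, the correction $|\Fhat_n(\theta_0^*)-\Fhat_n(\hat\theta_0^*)|$, and the third-derivative bookkeeping.

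One slip worth flagging: strong convexity of $\Fhat_n$ yields the \emph{lower} bound $\Fhat_n(\theta)\ge\Fhat_n(\hat\theta_0^*)+\tfrac{\mu-\delta}{2}\|\theta-\hat\theta_0^*\|^2$, which points the wrong way for lower-bounding $Z_\beta$. What you actually need for $\Fhat_n(\theta)\le\Fhat_n(\hat\theta_0^*)+C'\beta^{-1}$ on $\cB_s(\hat\theta_0^*)$ is an \emph{upper} bound on the Hessian --- smoothness, not strong convexity --- and that is exactly the $H+L^*+\delta$ bound above. With that correction your argument is sound; it just carries more overhead than the paper's version.
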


Proposition \ref{lem2} shows that as the inverse temperature parameter $\beta$ increases, the probability that we can sample points from $\cB_r(\theta^*_0)$ approaches one exponentially fast. The convergence speed is determined by $\alpha$, the gap between the global minimum and other local minima, as well as the dimension parameter $d$. However, in practice, we cannot choose $\beta$ arbitrarily large as we have to consider the computational cost in associated sampling algorithms (e.g., an MCMC algorithm). In general, when $\beta$ increases, the difficulty of sampling from $\pi_\beta(\theta)$ increases. In practice, we want to find a $\beta$ that balances the estimation accuracy and the sampling difficulty.

One difficulty when applying Proposition \ref{lem2} to the sampling approach is that in practice we may not be able to sample from $\pi_\beta$ exactly. For example, many MCMC algorithms can only draw samples from a distribution that is ``close" to $\pi_\beta$. To handle this issue,  To handle this issue, we impose the following assumption as a relaxation to the requirement of sampling from $\pi_{\beta}$ exactly.

 \begin{assumption} \label{ass2_v2}
There is a sampler $\hat{\cM}$ such that for any fixed $\delta_\beta\in [0,1)$, starting from any $\theta_0\in\Theta$, $ {\cM}$  can draw samples from a distribution $\hat{\pi}_{\beta}$ which satisfies $\|\hat{\pi}_{\beta}-\pi_\beta  \|_{TV}\le \delta_\beta.$
\end{assumption}

In addition, note that in practice, 
we can draw consecutive samples from the same chain of the underlying MCMC algorithm, which makes the samples correlated. 
The following lemma justifies the quality of the samples form $\hat{\cM}$ under Assumption \ref{ass2_v2}.

\begin{lemma} \label{approximation_sampling}
    Given a set $B$ and distribution $\pi_\beta$ with $\pi_\beta(B)>0$, suppose there exists a samplers $\hat{\cM}$ satisfying Assumption \ref{ass2_v2}. If we have $L$ samples from $\hat{\cM}$, then
    \[
      \Pb(X_1\notin B,\ldots, X_L\notin B)\leq (\pi(B^c)+\delta_\beta)^L.
    \]    \end{lemma}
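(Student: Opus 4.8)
The statement is about $L$ samples $X_1,\ldots,X_L$ drawn from the sampler $\hat{\cM}$, which by Assumption \ref{ass2_v2} produces samples from some distribution $\hat{\pi}_\beta$ with $\|\hat{\pi}_\beta-\pi_\beta\|_{TV}\le\delta_\beta$. The key subtlety is that these samples are \emph{not} assumed independent — they may be consecutive iterates of an MCMC chain. So the plan is to bound the joint probability $\Pb(X_1\notin B,\ldots,X_L\notin B)$ by peeling off one coordinate at a time and controlling each conditional probability uniformly.

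First I would write the joint probability as a product of conditionals,
\[
\Pb(X_1\notin B,\ldots,X_L\notin B)=\prod_{k=1}^{L}\Pb\big(X_k\notin B \,\big|\, X_1\notin B,\ldots,X_{k-1}\notin B\big),
\]
and argue that each factor is at most $\pi_\beta(B^c)+\delta_\beta$. For this I would invoke Assumption \ref{ass2_v2} in the form that, conditioned on any history (equivalently, starting the sampler from any $\theta_0\in\Theta$), the marginal law of the next sample $X_k$ is within $\delta_\beta$ in total variation of $\pi_\beta$; hence for the measurable set $B^c$ we get $\Pb(X_k\in B^c\mid \mathcal{H}_{k-1})\le \pi_\beta(B^c)+\|\hat\pi_\beta-\pi_\beta\|_{TV}\le\pi_\beta(B^c)+\delta_\beta$ by the variational characterization of total variation distance. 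Since this bound holds pointwise on the conditioning event $\{X_1\notin B,\ldots,X_{k-1}\notin B\}$, it also bounds the conditional probability $\Pb(X_k\notin B\mid X_1\notin B,\ldots,X_{k-1}\notin B)$. Multiplying the $L$ factors gives $(\pi_\beta(B^c)+\delta_\beta)^L$, which is the claim (writing $\pi$ for $\pi_\beta$ as in the statement).

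The main obstacle — really the only one — is the precise reading of Assumption \ref{ass2_v2}: it must be interpreted as giving a TV guarantee on the one-step marginal \emph{regardless of the starting point}, so that it can be applied to each conditional law along the trajectory, not just to the unconditional law of $X_1$. The phrase ``starting from any $\theta_0\in\Theta$'' in the assumption is exactly what licenses this: conditioning on $\{X_{k-1}=\theta_0\}$ and on the earlier history is subsumed by ``restart the sampler from $\theta_0$,'' whose output law is $\delta_\beta$-close to $\pi_\beta$. I would state this explicitly so that the telescoping argument is rigorous even in the correlated case. The rest is the elementary total-variation inequality $\mu(A)\le\nu(A)+\|\mu-\nu\|_{TV}$ and a one-line induction (or the product-of-conditionals identity above), with no computation to grind through.
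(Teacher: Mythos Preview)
Your proposal is correct and matches the paper's own proof essentially line for line: the paper peels off $X_L$ via the tower property, bounds $\E_{L-1}[1_{X_L\notin B}]=\hat\pi_{X_{L-1}}(B^c)\le\pi_\beta(B^c)+\delta_\beta$ using exactly the ``starting from any $\theta_0$'' clause of Assumption \ref{ass2_v2} that you singled out, and then inducts. Your product-of-conditionals framing is the same argument written slightly differently.
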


The following theorem then comes as a consequence of Proposition \ref{lem2} and Lemma \ref{approximation_sampling}.
 
\begin{theorem} \label{thm0}
Consider Algorithm \ref{alg:sample}. Suppose Assumptions \ref{aspt:morse}-\ref{ass2_v2} hold. For an arbitrary confidence level $\rho \in (0,1)$, let  $\delta=\mu \wedge r \wedge \alpha/4 $. If  sample size $n\geq n(\delta,\rho,d)=O(d\log(1/\rho)/\delta^2)$ and the inverse temperature $\beta \ge \Omega({r^{-2}})$, then  
there exists a constant $C\in(0,\infty)$ such that
\begin{equation}
  \label{equ:fail}
\Pb(\cF_0)\leq \rho+\exp \big(CL\cdot \max\big\{-\beta\alpha/2+d\log(\beta),\log(2\delta_\beta)\big\}\big).
   \end{equation}
   
\end{theorem}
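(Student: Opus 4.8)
The plan is to combine the approximation guarantee (Lemma~\ref{lem1}), the concentration estimate for $\pi_\beta$ (Proposition~\ref{lem2}), and the repeated-sampling bound (Lemma~\ref{approximation_sampling}) by a straightforward union-of-bad-events argument. Recall that $\cF_0$ is the event that none of the $L$ initial points output by Algorithm~\ref{alg:sample} lands in the attraction basin $\cB_r(\theta_0^*)$, so that the exploitation stage fails to reach $\theta_0^*$. The first step is to introduce the good event $\cE=\{\Fhat_n \text{ is a }\delta\text{-approximation of }F\}$. By Lemma~\ref{lem1}, since $n\ge n(\delta,\rho,d)$, we have $\Pb(\cE^c)\le\rho$. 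Then decompose
\begin{equation*}
\Pb(\cF_0)\le \Pb(\cE^c)+\Pb(\cF_0\mid\cE)\le \rho+\Pb(\cF_0\mid\cE),
\end{equation*}
so the task reduces to bounding $\Pb(\cF_0\mid\cE)$.

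The second step is to work on the event $\cE$. Since $\delta=\mu\wedge r\wedge\alpha/4$ satisfies the hypothesis $\delta<\mu\wedge r\wedge\alpha/4$ of Proposition~\ref{lem2} (up to the boundary case, which can be absorbed by taking $\delta$ slightly smaller, or by noting the proof of Proposition~\ref{lem2} only uses $\delta\le\mu\wedge r\wedge\alpha/4$), and since $\beta\ge\Omega(r^{-2})$, Proposition~\ref{lem2} applies conditionally on $\cE$ and gives
\begin{equation*}
\log\bigl(\pi_\beta(\cB_r^c(\theta_0^*))\bigr)=O\bigl(-\beta\alpha/2+d\log\beta\bigr).
\end{equation*}
The third step handles the fact that Algorithm~\ref{alg:sample} uses the approximate sampler $\hat\cM$ from Assumption~\ref{ass2_v2}, not $\pi_\beta$ itself, and produces possibly correlated samples $\theta_1,\dots,\theta_L$. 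Applying Lemma~\ref{approximation_sampling} with $B=\cB_r(\theta_0^*)$ yields
\begin{equation*}
\Pb(\theta_1\notin\cB_r(\theta_0^*),\dots,\theta_L\notin\cB_r(\theta_0^*)\mid\cE)\le\bigl(\pi_\beta(\cB_r^c(\theta_0^*))+\delta_\beta\bigr)^L.
\end{equation*}
Here I should be slightly careful: Lemma~\ref{approximation_sampling} is a statement about the sampler applied to $\pi_\beta$, and on $\cE$ the density $\Fhat_n$ (hence $\pi_\beta$) is a fixed — though data-dependent — object, so the conditional application is legitimate; if one wants full rigor one applies the lemma pointwise for each realization of $\Fhat_n$ in $\cE$ and then takes expectations.

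The final step is bookkeeping. On $\cF_0\cap\cE$, the event $\{\theta_i\notin\cB_r(\theta_0^*)\ \forall i\}$ occurs, because $\cB_r(\theta_0^*)\subseteq\B_0^*$ by Assumption~\ref{ass_ab}, so any $\theta_i\in\cB_r(\theta_0^*)$ would give $\cT(\theta_i)=\theta_0^*$ and preclude failure. Hence $\Pb(\cF_0\mid\cE)\le(\pi_\beta(\cB_r^c(\theta_0^*))+\delta_\beta)^L$. Using $a+b\le 2\max\{a,b\}$ and the concentration bound from Step~2, there is a constant $C\in(0,\infty)$ with
\begin{equation*}
\pi_\beta(\cB_r^c(\theta_0^*))+\delta_\beta\le 2\max\bigl\{\pi_\beta(\cB_r^c(\theta_0^*)),\,\delta_\beta\bigr\}\le \exp\bigl(C\max\{-\beta\alpha/2+d\log\beta,\ \log(2\delta_\beta)\}\bigr),
\end{equation*}
and raising to the $L$-th power and adding $\rho$ gives exactly \eqref{equ:fail}. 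I expect the only subtle point — not a deep obstacle, but the step requiring the most care — to be the conditioning arguments: justifying that Proposition~\ref{lem2} and Lemma~\ref{approximation_sampling}, both stated unconditionally for a deterministic $\Fhat_n$, may be invoked on the random event $\cE$, which is handled cleanly by conditioning on the outsourced data and noting $\cE$ is measurable with respect to it. Everything else is assembling inequalities already established in the excerpt.
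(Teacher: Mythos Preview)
Your proposal is correct and follows essentially the same route as the paper's proof: introduce the good event that $\Fhat_n$ is a $\delta$-approximation (the paper calls it $\cI_n(\delta)$), bound its complement by $\rho$ via Lemma~\ref{lem1}, then on that event combine Lemma~\ref{approximation_sampling} with Proposition~\ref{lem2} and the inequality $a+b\le 2\max\{a,b\}$ to obtain the bound on $\Pb(\cF_0\mid\cE)$. If anything, you are more careful than the paper about the conditioning step and the boundary case $\delta=\mu\wedge r\wedge\alpha/4$.
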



Theorem \ref{thm0} shows that the probability that the sampling approach fails to find the global minimum of $F(\theta)$ decays exponentially fast as the inverse temperature $\beta$ and sample size $L$ increase.
In particular, $L$ only needs to surpass some dimensional independent constants, i.e., the convexity constant $\mu$ and the separability constant $\alpha$ of global minimum from other local minima.  In contrast, by \eqref{eq:random}, the benchmark random start method would require the number of random initialization $m$ to depend exponentially on the dimension. 
We comment that Algorithm \ref{alg:sample} does require an outsourced sampling algorithm to obtain samples from $\pi_\beta$, which can be computationally costly, 
but this task is outsourced and we achieve a much smaller the in-house computational cost.
Finally, it is worth mentioning that in Theorem \ref{thm0}, both $\beta$ and $n$ scale as $r^{-2}$. This is in agreement with the Bayesian setup  \eqref{eq:bayes}, which suggests $\beta$ should scale linearly with $n$. 




\subsection{Performance of the optimization approaches}
We first provide an analysis of the SAO approach in Algorithm \ref{alg:optimization}. 
Let $\cF_1$ denote the random event that the output of Algorithm \ref{alg:optimization}-SAO approach fails to find the global minimum of $F(\theta)$. 
The result is largely the same as Theorem \ref{thm0}, although the proof is slightly more difficult. 
\begin{theorem}  \label{thm1}  
Consider Algorithm \ref{alg:optimization}-SAO. Suppose Assumptions \ref{aspt:morse}-\ref{ass2_v2} hold. For an arbitrary confidence level $\rho \in (0,1)$, let  $\delta=\mu \wedge r \wedge \alpha/4 $. If the sample size $n\geq n(\delta,\rho,d)=O(d\log(1/\rho)/\delta^2)$ and the inverse temperature $\beta \ge \Omega({r^{-2}})$, then   
there exists a constant $C\in(0,\infty)$ such that
\[
\Pb(\cF_1)\leq \rho+\exp \big(CL\cdot \max\big\{-\beta\alpha/2+d\log(\beta),\log(2\delta_\beta)\big\}\big).
\]

\end{theorem}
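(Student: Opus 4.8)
The plan is to follow the architecture of the proof of Theorem~\ref{thm0}, replacing the step ``some sample lands in $\cB_r(\theta_0^*)$'' by a step that controls the output of the selection rule \eqref{criterion_3}. Decompose $\cF_1$ according to whether the $\delta$-approximation event $\mathcal{A}:=\{\hat{F}_n \text{ is a } \delta\text{-approximation of } F\}$ occurs. By Lemma~\ref{lem1}, with $\delta=\mu\wedge r\wedge\alpha/4$ and $n\ge n(\delta,\rho,d)=O(d\log(1/\rho)/\delta^2)$ we have $\Pb(\mathcal{A}^c)\le\rho$, so it suffices to bound $\Pb(\cF_1\cap\mathcal{A})$.

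Work deterministically on $\mathcal{A}$. I would first show that the SAO output is $\theta^0=\hat{\theta}_0^*$ whenever at least one sample falls in the ball $B:=\cB_{r-2\delta}(\theta_0^*)$, via two sub-claims. \emph{(a) Attraction.} Since $\sup_\theta\|\nabla^2 F(\theta)-\nabla^2\hat{F}_n(\theta)\|_{\text{op}}\le\delta<\mu$, the function $\hat{F}_n$ is $(\mu-\delta)$-strongly convex on $\cB_r(\theta_0^*)$; as $\|\hat{\theta}_0^*-\theta_0^*\|\le\delta$ we have $B\subseteq\cB_{r-\delta}(\hat{\theta}_0^*)\subseteq\cB_r(\theta_0^*)$, so $\hat{\theta}_0^*$ is the unique minimizer of $\hat{F}_n$ over $\cB_r(\theta_0^*)$ and, by the same local-convergence reasoning that underlies Assumption~\ref{ass_ab}, $\hat\cT$ initialized anywhere in $B$ converges to $\hat{\theta}_0^*$. \emph{(b) Strict minimality.} For every other stationary point, Assumption~\ref{ass1} gives $F(\theta_j^*)-F(\theta_0^*)\ge\alpha$; since $\theta_j^*,\theta_0^*$ are stationary points of $F$ and $F$ has bounded Hessian and third derivative on the compact set $\Theta$, $|F(\hat{\theta}_j^*)-F(\theta_j^*)|=O(\delta^2)$ and $|F(\hat{\theta}_0^*)-F(\theta_0^*)|=O(\delta^2)$, so combining with $\sup_\theta|F-\hat{F}_n|\le\delta$ gives $\hat{F}_n(\hat{\theta}_j^*)-\hat{F}_n(\hat{\theta}_0^*)\ge\alpha-O(\delta)>0$ because $\delta<\alpha/4$. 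Hence $\hat{\theta}_0^*$ is the strict global minimizer of $\hat{F}_n$ among all stationary points $\{\hat{\theta}_i^*\}$, so if any $\theta_i\in B$ the minimizing index in \eqref{criterion_3} satisfies $\hat\cT(\theta_{i^*})=\hat{\theta}_0^*$. Since $\|\hat{\theta}_0^*-\theta_0^*\|\le\delta<r$, $\theta^0=\hat{\theta}_0^*\in\cB_r(\theta_0^*)\subseteq\B_0^*$, so $\cT(\theta^0)=\theta_0^*$ and the exploitation stage succeeds. Consequently, on $\mathcal{A}$, $\cF_1$ forces all $L$ samples to avoid $B$.

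It remains to bound $\Pb(\theta_1\notin B,\dots,\theta_L\notin B)$ for the (possibly correlated, possibly inexact) samples produced by $\hat{\cM}$. Lemma~\ref{approximation_sampling} applied with the set $B$ and Assumption~\ref{ass2_v2} gives the bound $(\pi_\beta(B^c)+\delta_\beta)^L$. Since $B$ is a ball of radius $r-2\delta=\Omega(r)$ centered at $\theta_0^*$, the argument of Proposition~\ref{lem2} applies essentially unchanged---the radius enters only through the requirement $\beta\ge\Omega(r^{-2})$---and yields $\log\pi_\beta(B^c)=O(-\beta\alpha/2+d\log\beta)$. Using $(a+b)^L\le(2\max\{a,b\})^L$, taking logarithms, and adding back $\Pb(\mathcal{A}^c)\le\rho$ gives $\Pb(\cF_1)\le\rho+\exp\big(CL\cdot\max\{-\beta\alpha/2+d\log\beta,\ \log(2\delta_\beta)\}\big)$ for a suitable $C\in(0,\infty)$, which is the claim.

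The main obstacle I expect is sub-claim~(a): transferring the strong-convexity-based attraction guarantee from the pair $(\cT,F)$ to the genuinely different pair $(\hat\cT,\hat{F}_n)$ while tracking both the $O(\delta)$ shrinkage of the basin and the $O(\delta)$ shift of its center from $\theta_0^*$ to $\hat{\theta}_0^*$, and checking that the shrunken ball $B$ is still wide enough for the Proposition~\ref{lem2} concentration estimate to hold with the same $\beta\ge\Omega(r^{-2})$. A secondary subtlety is that \eqref{criterion_3} ranks the points $\hat\cT(\theta_i)$ by their $\hat{F}_n$-values, and these points are stationary for $\hat{F}_n$ but need not be stationary for $F$; sub-claim~(b), where Assumption~\ref{ass1} and $\delta<\alpha/4$ are used together, is precisely what prevents a spurious empirical stationary point from underbidding $\hat{\theta}_0^*$ in \eqref{criterion_3}.
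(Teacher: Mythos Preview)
Your proposal is correct and follows essentially the same route as the paper. The paper's proof is slightly more compressed: instead of introducing the shrunken ball $B=\cB_{r-2\delta}(\theta_0^*)$ and reproving the concentration estimate for it, the paper works directly with $\cB_r(\theta_0^*)$, asserts (without the careful center-shift bookkeeping you do) that $\hat\cT$ started anywhere in $\cB_r(\theta_0^*)$ reaches $\hat\theta_0^*$, observes that $\hat\theta_0^*$ is the global minimizer of $\hat F_n$ (your sub-claim~(b), which the paper leaves implicit), and then simply records the containment $\cF_1\subseteq\cF_0$ and invokes Theorem~\ref{thm0} verbatim. Your version is more explicit about the two points the paper glosses over---the transfer of the attraction guarantee from $(\cT,F)$ to $(\hat\cT,\hat F_n)$ and the strict minimality of $\hat\theta_0^*$ among empirical stationary points---but the architecture and the key ingredients (Proposition~\ref{lem2}, Lemma~\ref{approximation_sampling}, Assumption~\ref{ass1} with $\delta<\alpha/4$) are identical.
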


We next analyze the annealing approach in Algorithm \ref{alg:optimization}. Let $\cF_2$ be the random event that Algorithm \ref{alg:optimization}-annealing fails to find the global minimum of $F(\theta)$.
The annealing approach needs more restrictions than the SAO approach. This is because: in order to generate a good starting point, one of the samples need to fall close to $\theta^*_0$. Moreover, its $\Fhat_n$-value needs to be lower than the other samples. This 
can be formulated as requiring a smaller radius $r_0$ for the attraction neighborhood:
%
\begin{theorem} \label{thm2}
Consider Algorithm \ref{alg:optimization}-annealing. Suppose Assumptions \ref{aspt:morse}-\ref{ass2_v2} hold. For an arbitrary confidence level $\rho \in (0,1)$, let  $r=r_0$ where $r_0^2\cdot \sup_{\theta \in \Theta}\|\nabla^2 F(\theta)  \|_{\text{op}}  <\alpha$ and $\delta=\mu \wedge r \wedge \alpha/4 $.  If the sample size $n\geq n(\delta,\rho,d)=O(d\log(1/\rho)/\delta^2)$ and the inverse temperature $\beta \ge \Omega({r^{-2}})$, then   
there exists a constant $C\in(0,\infty)$ such that
\[
\Pb(\cF_2)\leq \rho+\exp \big(CL\cdot \max\big\{-\beta\alpha/2+d\log(\beta),\log(2\delta_\beta)\big\}\big).
\]
\end{theorem}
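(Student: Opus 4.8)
The plan is to follow the template of the proof of Theorem \ref{thm0}: condition on the high-probability event that $\Fhat_n$ is a $\delta$-approximation of $F$, show that on that event the annealing selection rule \eqref{criterion_2} returns a point inside $\B_0^*$ as soon as one of the $L$ samples falls near $\theta_0^*$, and then bound the probability that all samples miss that neighborhood. The only substantively new point, relative to Theorem \ref{thm0} (where $\cT$ is run from \emph{all} samples, so a single good sample suffices), is that the annealing algorithm runs $\cT$ only from the $\Fhat_n$-argmin; hence one must rule out that some sample lying outside the attraction region, but with a small $\Fhat_n$-value, wins the $\mathrm{argmin}$. This is precisely the role of the extra hypothesis $r_0^2\sup_{\theta\in\Theta}\|\nabla^2F(\theta)\|_{\text{op}}<\alpha$, and it is why the target radius $r_0$ must be taken smaller than the radius $r$ appearing in Assumptions \ref{ass_ab}--\ref{ass1}.

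By Lemma \ref{lem1}, for $n\ge n(\delta,\rho,d)$ the event $E=\{\Fhat_n\text{ is a }\delta\text{-approximation of }F\}$ has probability at least $1-\rho$, so it suffices to bound $\Pb(\cF_2\cap E)$; on $E$ we have $\sup_{\Theta}|\Fhat_n-F|\le\delta$. I would first establish a value gap. Since $\theta_0^*$ is the global minimizer of $F$ and, by Assumption \ref{aspt:morse}, lies in the interior of $\Theta$, we have $\nabla F(\theta_0^*)=0$, so a second-order Taylor expansion gives $F(\theta)-F(\theta_0^*)\le\tfrac12 r_0^2\sup_{\Theta}\|\nabla^2F\|_{\text{op}}<\tfrac\alpha2$ for all $\theta\in\cB_{r_0}(\theta_0^*)$, whereas Assumption \ref{ass1} gives $F(\theta)-F(\theta_0^*)\ge\alpha$ for all $\theta\notin\cB_r(\theta_0^*)$ (with $r_0\le r$, so $\cB_{r_0}(\theta_0^*)\subseteq\cB_r(\theta_0^*)\subseteq\B_0^*$). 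Combining the strict inequality $\tfrac12 r_0^2\sup_{\Theta}\|\nabla^2F\|_{\text{op}}<\tfrac\alpha2$ with $\delta\le\tfrac\alpha4$ yields $\tfrac12 r_0^2\sup_{\Theta}\|\nabla^2F\|_{\text{op}}+2\delta<\alpha$, whence on $E$
\[
\sup_{\theta\in\cB_{r_0}(\theta_0^*)}\Fhat_n(\theta)\;<\;\inf_{\theta\notin\cB_r(\theta_0^*)}\Fhat_n(\theta).
\]
Therefore, on $E$, if some sample $\theta_j\in\cB_{r_0}(\theta_0^*)$ then $\Fhat_n(\theta_j)<\Fhat_n(\theta_k)$ for every sample $\theta_k\notin\cB_r(\theta_0^*)$, so the minimizer $\theta_{i^*}$ in \eqref{criterion_2} satisfies $\theta_{i^*}\in\cB_r(\theta_0^*)\subseteq\B_0^*$ and hence $\cT(\theta_{i^*})=\theta_0^*$: the algorithm succeeds. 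Consequently $\cF_2\cap E\subseteq\{\theta_1\notin\cB_{r_0}(\theta_0^*),\dots,\theta_L\notin\cB_{r_0}(\theta_0^*)\}$.

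It remains to bound $\Pb(\theta_1\notin\cB_{r_0}(\theta_0^*),\dots,\theta_L\notin\cB_{r_0}(\theta_0^*))$. Re-running the estimate behind Proposition \ref{lem2} with the radius $r_0$ (for which $\delta<\mu\wedge r_0\wedge\alpha/4$ and $\beta\ge\Omega(r_0^{-2})$ hold by hypothesis) gives $\log\pi_\beta\big(\cB_{r_0}^c(\theta_0^*)\big)=O(-\beta\alpha/2+d\log\beta)$, and Lemma \ref{approximation_sampling} applied with $B=\cB_{r_0}(\theta_0^*)$ absorbs both the sampling bias $\delta_\beta$ of $\hat\cM$ and the correlation among the $L$ draws, giving
\[
\Pb\big(\theta_1\notin\cB_{r_0}(\theta_0^*),\dots,\theta_L\notin\cB_{r_0}(\theta_0^*)\big)\le\big(\pi_\beta(\cB_{r_0}^c(\theta_0^*))+\delta_\beta\big)^L\le\big(2\max\{\pi_\beta(\cB_{r_0}^c(\theta_0^*)),\delta_\beta\}\big)^L.
\]
Combining with $\Pb(E^c)\le\rho$ then produces, for a suitable $C\in(0,\infty)$, the stated bound $\Pb(\cF_2)\le\rho+\exp\big(CL\cdot\max\{-\beta\alpha/2+d\log\beta,\ \log(2\delta_\beta)\}\big)$. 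I expect the main obstacle to be the value-gap step: ensuring the gap is genuinely \emph{strict} forces the delicate bookkeeping with the strict condition $r_0^2\sup_\Theta\|\nabla^2F\|_{\text{op}}<\alpha$ together with $\delta\le\alpha/4$, and this is exactly why a smaller attraction radius than in Theorems \ref{thm0}--\ref{thm1} is unavoidable; a secondary, more routine, obstacle is redoing the concentration estimate of Proposition \ref{lem2} with the smaller radius $r_0$ while keeping the exponent in the form $-\beta\alpha/2+d\log\beta$.
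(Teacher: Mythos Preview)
Your proposal is correct and follows essentially the same approach as the paper: establish the value-gap $\sup_{\cB_{r_0}(\theta_0^*)}\Fhat_n<\inf_{\Theta\setminus\cB_{r_0}(\theta_0^*)}\Fhat_n$ on the $\delta$-approximation event via the Taylor bound $F(\theta)-F(\theta_0^*)\le\tfrac12 r_0^2\sup_\Theta\|\nabla^2F\|_{\text{op}}<\alpha/2$ together with Assumption \ref{ass1} and $\delta\le\alpha/4$, then finish exactly as in Theorem \ref{thm0} via Proposition \ref{lem2} and Lemma \ref{approximation_sampling}. One cosmetic remark: in the theorem statement $r$ is \emph{set equal to} $r_0$, so the two radii you carefully distinguish (writing $r_0\le r$ and $\cB_{r_0}\subseteq\cB_r$) are in fact the same ball; your argument remains valid, it just simplifies.
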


\subsection{Extension to $\epsilon$-Global Minimum}
\label{sec:epsglobal}
One major constraint in our previous analysis is Assumption \ref{ass1}--the global minimizer is unique with a gap of $\alpha>0$. In practice, there can be multiple local minima that have function values very close to the global minimum.
In this setting, it can be too ambitious to fine the global minimum and it may be more reasonable to find an approximately optimal solution. 
%
%
Given a user-specified accuracy level $\epsilon$, we are interested in finding a local minimum whose objective value is within $\epsilon$-distance from the optimal objective value, i.e., $\theta^*_i$ such that $F(\theta_i^*)\le F(\theta_0^*)+\epsilon$. We call a such local minimum an $\epsilon$-global minimum of $F(\theta)$. In this subsection, we conduct performance analysis for our algorithms to find an $\epsilon$-global minimum. Let
\[\cJ^*_\epsilon=\big\{i: F(\theta_i^*)\le F(\theta_0^*)+\epsilon\big\}\] 
be the index set of the $\epsilon$-global minimums. To be concise, we only present the analysis for the annealing-based optimization approach (Algorithm \ref{alg:optimization}-annealing). The results for the other methods are similar. 

We first introduce the ``attraction region'' of the $\epsilon$-global minimums:
\begin{definition}[``Attraction region'' of $\epsilon$-global minimums] Given an optimization algorithm $\cT$, we define the attraction basin of $\epsilon$-global minimums of $F(\theta)$ as
 $$
 \B_\epsilon^* = \big\{ \theta \in \Theta: F\big(\cT(\theta)\big)\leq F(\theta_0^*)+\epsilon \big\}.
 $$
\end{definition}
By definition, the optimization algorithm $\cT$ converges to an $\epsilon$-global minimum if and only if it starts with an initial point in $ \B_\epsilon^* $. However, same as before,  $ \B_\epsilon^* $ is hard to characterize directly. So we consider the following subset as a substitution
$$
\cB_{\epsilon,r_\epsilon}:=\bigcup_{i \in \cJ^*_\epsilon} \cB_{r_\epsilon}(\theta^*_i) \subseteq  \B_\epsilon^*.
$$
  
Let $\cF_{\epsilon,2}$ be the random event that the output of Algorithm \ref{alg:optimization}-annealing fails to find the $\epsilon$-global minimum of $F(\theta)$. 
\begin{theorem} \label{thm3}
    Suppose Assumptions \ref{aspt:morse}, \ref{ass_c0} and \ref{ass2_v2} hold. For any user-specified accuracy $\epsilon>0$, pick $r_\epsilon$ such that $\sup_{\theta \in \Theta} \|\nabla^2 F(\theta)\|_{\text{op}} \cdot r_\epsilon^2 \le \epsilon$. In addition, assume the approximation accuracy $\delta$ satisfies $\delta<\mu \wedge r_\epsilon \wedge \epsilon/12$. For an arbitrary confidence level $\rho \in (0,1)$, if the sample size $n\ge n(\delta,\rho,d)=O(d\log(1/\rho)/\delta^2)$ and the inverse temperature $\beta \ge \Omega({r_\epsilon^{-2}})$, then there exists a constant $C\in(0,\infty)$ such that
\[
\Pb(\cF_{\epsilon,2})\leq \rho+\exp \big(CL\cdot \max\big\{ -\beta\epsilon/6+d\log(\beta),\log(2\delta_\beta) \big\}\big).
\]
\end{theorem}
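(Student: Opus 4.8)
The plan is to mirror the proof of Theorem~\ref{thm2} but replace the single-basin analysis around $\theta_0^*$ with a union over the $\epsilon$-global minimizers, and replace the gap parameter $\alpha$ by the tolerance $\epsilon$ (up to constants). First I would invoke Lemma~\ref{lem1}: since $n \ge n(\delta,\rho,d)$ with $\delta < \mu \wedge r_\epsilon \wedge \epsilon/12$, with probability at least $1-\rho$ we have that $\Fhat_n$ is a $\delta$-approximation of $F$. Condition on this event for the rest of the argument; the final bound then carries the additive $\rho$ from this conditioning. On this event, the stationary points $\hat\theta_i^*$ of $\Fhat_n$ are within $\delta$ of the stationary points $\theta_i^*$ of $F$, and $|F-\Fhat_n|$, $\|\nabla F - \nabla\Fhat_n\|$, $\|\nabla^2 F - \nabla^2 \Fhat_n\|_{\text{op}}$ are all bounded by $\delta$ uniformly on $\Theta$.

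The core of the argument is a probabilistic lower bound on $\pi_\beta\big(\cB_{\epsilon,r_\epsilon}\big)$, the $\pi_\beta$-mass of the union of balls $\cB_{r_\epsilon}(\theta_i^*)$ over $i \in \cJ_\epsilon^*$, which I would obtain by adapting Proposition~\ref{lem2}. For the numerator (mass near an $\epsilon$-optimal basin), it suffices to lower bound $\pi_\beta\big(\cB_{r_\epsilon}(\theta_0^*)\big)$: the choice $\sup_\theta\|\nabla^2 F(\theta)\|_{\text{op}} \cdot r_\epsilon^2 \le \epsilon$ together with the $\delta$-approximation (with $\delta$ small relative to $\epsilon$ and $r_\epsilon$) controls $\Fhat_n(\theta) - \Fhat_n(\hat\theta_0^*)$ on that ball by $O(\epsilon)$, giving a lower bound of order $\exp(-O(\beta\epsilon)) \cdot r_\epsilon^d$ after integrating $\exp(-\beta\Fhat_n)$ over the ball and using the volume of a $d$-ball of radius $\Omega(r_\epsilon)$. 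For the denominator I would bound $\int_\Theta \exp(-\beta\Fhat_n(\theta))\,d\theta$ from above by splitting $\Theta$ into the strongly-convex neighborhoods of the stationary points (where a Laplace-type estimate gives a $(\beta)^{-d/2}$-type factor, absorbed into the $d\log\beta$ term) and the complement where $\|\nabla \Fhat_n\|$ is bounded below, contributing a uniformly larger value of $\Fhat_n$. Combining, and using that for $i \notin \cJ_\epsilon^*$ the value $F(\theta_i^*)$ exceeds $F(\theta_0^*) + \epsilon$ (so on $\cB_{r_\epsilon}(\theta_i^*)$, $\Fhat_n$ stays above $\Fhat_n(\hat\theta_0^*) + \epsilon - O(\delta) - O(\epsilon r_\epsilon^2\|\nabla^2F\|) \ge \Fhat_n(\hat\theta_0^*) + \epsilon/2$ say), yields
\[
\log \pi_\beta\big(\cB^c_{\epsilon,r_\epsilon}\big) = O\big(-\beta\epsilon/6 + d\log\beta\big),
\]
where the constant $6$ (versus $4$ for $\alpha$ in Proposition~\ref{lem2}) is the book-keeping slack from the two-sided $O(\delta)$ and $O(r_\epsilon^2)$ error terms under $\delta < \epsilon/12$.

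To pass from a single sample to the $L$ correlated samples produced by $\hat\cM$, I would apply Lemma~\ref{approximation_sampling} with $B = \cB_{\epsilon,r_\epsilon}$: the probability that all $L$ samples $\theta_1,\dots,\theta_L$ miss $\cB_{\epsilon,r_\epsilon}$ is at most $\big(\pi_\beta(\cB^c_{\epsilon,r_\epsilon}) + \delta_\beta\big)^L \le \big(2\max\{\pi_\beta(\cB^c_{\epsilon,r_\epsilon}),\delta_\beta\}\big)^L$, which is $\exp\big(CL\max\{-\beta\epsilon/6 + d\log\beta, \log(2\delta_\beta)\}\big)$ after taking logarithms and folding the constant $2$ into $C$. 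Finally, if at least one sample $\theta_j$ lands in some $\cB_{r_\epsilon}(\theta_i^*)$ with $i\in\cJ_\epsilon^*$, then because $\cB_{\epsilon,r_\epsilon}\subseteq\B_\epsilon^*$ that sample satisfies $F(\cT(\theta_j)) \le F(\theta_0^*)+\epsilon$; and the annealing selection rule $i^* = \mathrm{argmin}_i \Fhat_n(\theta_i)$ will not discard it in favor of a worse point — here I need the $\delta$-approximation once more to argue that the sample with the lowest $\Fhat_n$-value also has a low $F$-value, hence also lies in $\B_\epsilon^*$ (possibly after enlarging the implicit constant), so $\cT$ applied to $\theta^0$ still reaches an $\epsilon$-global minimum. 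Adding the $\rho$ from the $\delta$-approximation event gives the stated bound.

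\textbf{Main obstacle.} The delicate step is the annealing selection argument at the end: lowest $\Fhat_n$-value does not literally imply membership in $\cB_{\epsilon,r_\epsilon}$, only that the selected point has $\Fhat_n$-value no larger than that of the good sample, i.e.\ $\Fhat_n(\theta^0) \le \Fhat_n(\hat\theta_0^*) + O(\epsilon)$. One must then show that \emph{any} $\theta$ with $\Fhat_n(\theta)$ this small necessarily lies in the attraction region of some $\epsilon$-global minimizer of $F$ — this is where the choice $r = r_\epsilon$ with $\sup_\theta\|\nabla^2 F(\theta)\|_{\text{op}}\, r_\epsilon^2 \le \epsilon$ is essential (exactly as in Theorem~\ref{thm2}), since it guarantees that a point with near-minimal function value cannot sit on a ridge or near a spurious stationary point whose value exceeds $F(\theta_0^*)+\epsilon$, and it forces the $O(\epsilon)$ slack accumulated along the way to stay below the $\epsilon$ tolerance. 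Propagating the constants carefully so that $\delta < \mu\wedge r_\epsilon\wedge\epsilon/12$ suffices (rather than a smaller fraction) is the only real arithmetic in the proof.
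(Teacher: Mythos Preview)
Your four-step skeleton --- Lemma~\ref{lem1} for the $\delta$-approximation event, an adaptation of Proposition~\ref{lem2} to bound $\pi_\beta(\cB^c_{\epsilon,r_\epsilon})$, Lemma~\ref{approximation_sampling} to pass to $L$ samples, then analysis of the annealing selection --- is exactly the paper's structure. Two places deserve comment.

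\textbf{The selection step.} You frame the main obstacle as showing that the selected point $\theta_{i^*}$ lies in $\B_\epsilon^*$. The paper does \emph{not} attempt this. It simply bounds $F(\theta_{i^*})$ directly: if some sample $\theta_i$ lands in $\cB_{r_\epsilon}(\theta_k^*)$ for $k\in\cJ_\epsilon^*$, then $\Fhat_n(\theta_i)\le F(\theta_0^*)+2\epsilon+\delta$ (Taylor plus $\delta$-approximation), so the selected $\theta_j$ has $\Fhat_n(\theta_j)\le\Fhat_n(\theta_i)$ and hence $F(\theta_j)\le F(\theta_0^*)+2\epsilon+2\delta$. With $\delta=\epsilon/4$ this is at most $F(\theta_0^*)+3\epsilon$; the final substitution $\epsilon\leftarrow\epsilon/3$ (which is the source of the $\epsilon/6$ and $\epsilon/12$ constants in the statement) then gives $F(\theta_j)\le F(\theta_0^*)+\epsilon$, and the paper stops there. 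It never argues that $\theta_j$ lies in any attraction basin, so the question of what $\cT(\theta_j)$ converges to is left open. The inference you propose --- low $F$-value implies membership in $\B_\epsilon^*$ --- would close that gap but does not follow from the stated assumptions (a near-optimal function value does not by itself pin $\theta$ to a strongly convex neighborhood of an $\epsilon$-global minimizer). So the obstacle you identify is genuine; the paper sidesteps it rather than resolving it, and your attempted resolution would need further structural input on the sublevel sets of $F$.

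\textbf{The $\pi_\beta$-mass estimate.} Your Laplace-type decomposition of the denominator over all stationary points is more machinery than the paper uses. The paper asserts the uniform lower bound $\Fhat_n(\theta)-\Fhat_n(\theta_0^*)\ge\epsilon-2\delta$ on $\Theta\setminus\cB_{\epsilon,r_\epsilon}$ (playing the role that Assumption~\ref{ass1} played before, with $\alpha$ replaced by $\epsilon$), restricts the numerator to the single ball $\cB_{r_\epsilon}(\theta_0^*)$, and then copies the Gaussian-integral computation from Proposition~\ref{lem2} verbatim. No per-basin accounting is needed.
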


Note that Theorem \ref{thm3} establishes a similar performance guarantee to Theorem \ref{thm2}. However, the convergence rate in Theorem \ref{thm3} is determined by the user-specified accuracy $\epsilon$ instead of the gap constant $\alpha$.



\section{Numerical Experiment} \label{sec:num}
 In this section, we conduct numerical experiments to demonstrate the performance of our exploration and data outsourcing mechanisms.
We compare the performance of our algorithms to random start. We also run sensitivity analysis to demonstrate the robustness of our algorithms with respect to two key hyper-parameters: the outsourcing sample size $n$ and the inverse temperature $\beta$.
 
\subsection{Classic Nonconvex Test Function}
We first consider a classic nonconvex optimization problem -- the Styblinski-Tang function (ST-function) \citep{grigoryev2016global}. 
A $d$-dimensional ST-function is defined as  
\begin{align*}
    F(\theta)=\frac{\sum_{i=1}^d [\theta]^4_i-16[\theta]^2_i+5[\theta]_i}{2d},\ -5\le [\theta]_i \le 5,
\end{align*}
where $[\theta]_i$ denotes the $i$-th coordinate of $\theta$. Note that  ST-function is additively separable. By the first-order optimality condition, the stationary point set of $F(\theta)$ is $\{\theta \in \mathbb{R}^d: 4[\theta]^3_i-32[\theta]_i+5=0, \forall i \in [d] \}$. Moreover, the unique global minimum of ST-function is $\theta^*\approx (-2.903,\ldots,-2.903)$ and the corresponding objective value is $-39.165$.     
In this numerical experiment,  we set $d=5$ and use gradient descent (GD) as $\cT$.
We apply OIPS-annealing to generate the initial points. Since the ST-function is not defined through expectation, we do not consider data outsourcing here, i.e., $\hat{F}_n(\theta)=F(\theta)$.  We test different inverse temperatures $\beta=1,4$, and $10$. For each $\beta$, we use importance sampling to draw i.i.d. samples from the target distribution $\pi_\beta(\theta)\propto \exp\{-\beta F(\theta)\}$ exactly. For GD in the optimization phase, we use a step-size $0.05$ and run $50$ iterations. We pick the objective value at the last iteration as the convergent value. As the benchmark, we sample the initial point uniformly at random from the cubic $[-5,5]^d$ (random start). Finally, for each setting, we repeat the procedure $500$ times and record the final convergent values. Figure \ref{exp1} shows the distribution of convergent function values when the initial points are drawn from OIPS-annealing algorithm with different values of $\beta$ versus the benchmark method.  Note that compared with random start, initial points obtained by OIPS-annealing typically lead to smaller objective values. Moreover, as the inverse temperature $\beta$ increases, the performance of OIPS-annealing algorithm further improves.

\begin{figure}[ht]
    \centering  
    \subfigure[Random start]{
    \includegraphics[width=0.33\textwidth]{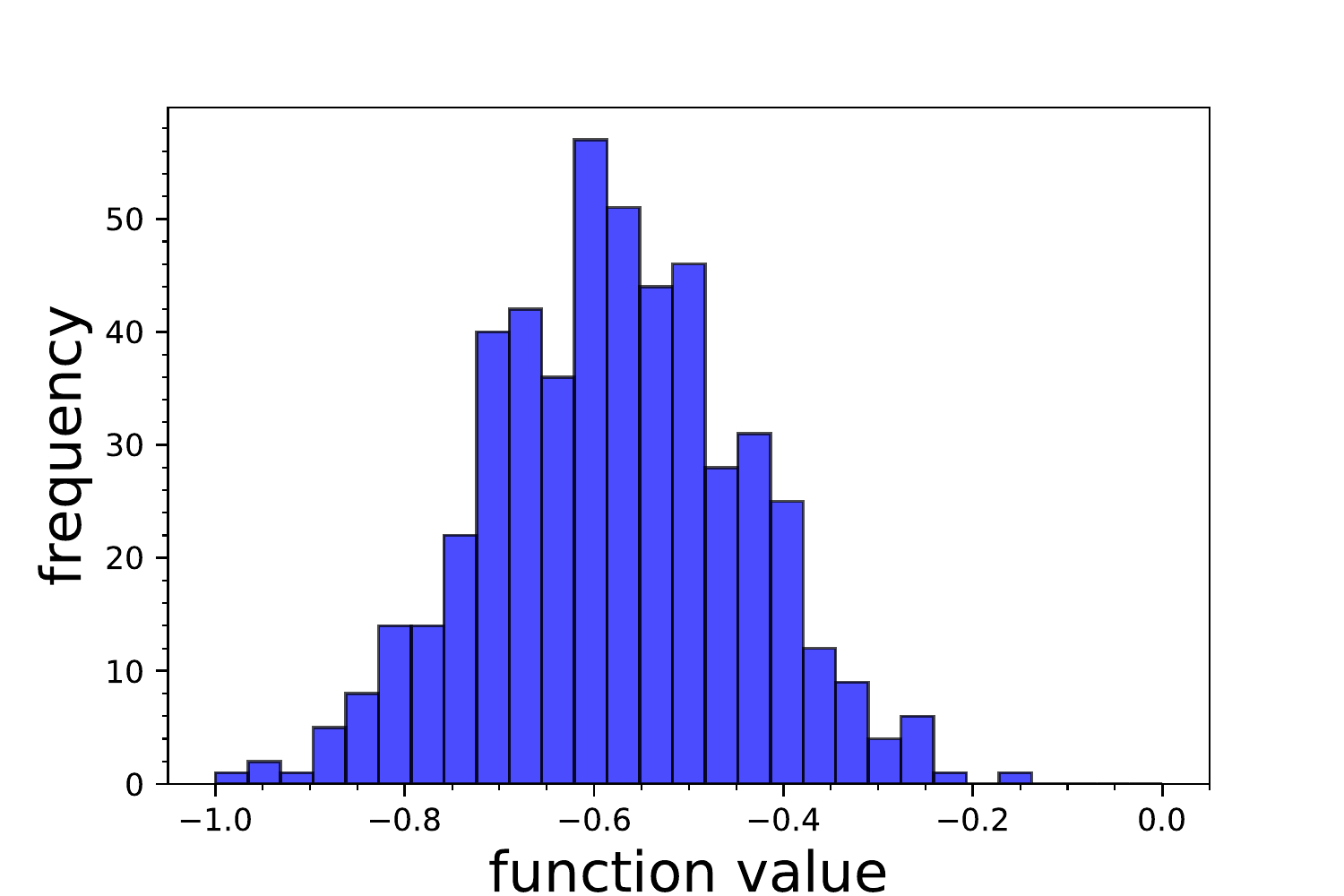}}
    \subfigure[$\beta=1$]{
    \includegraphics[width=0.33\textwidth]{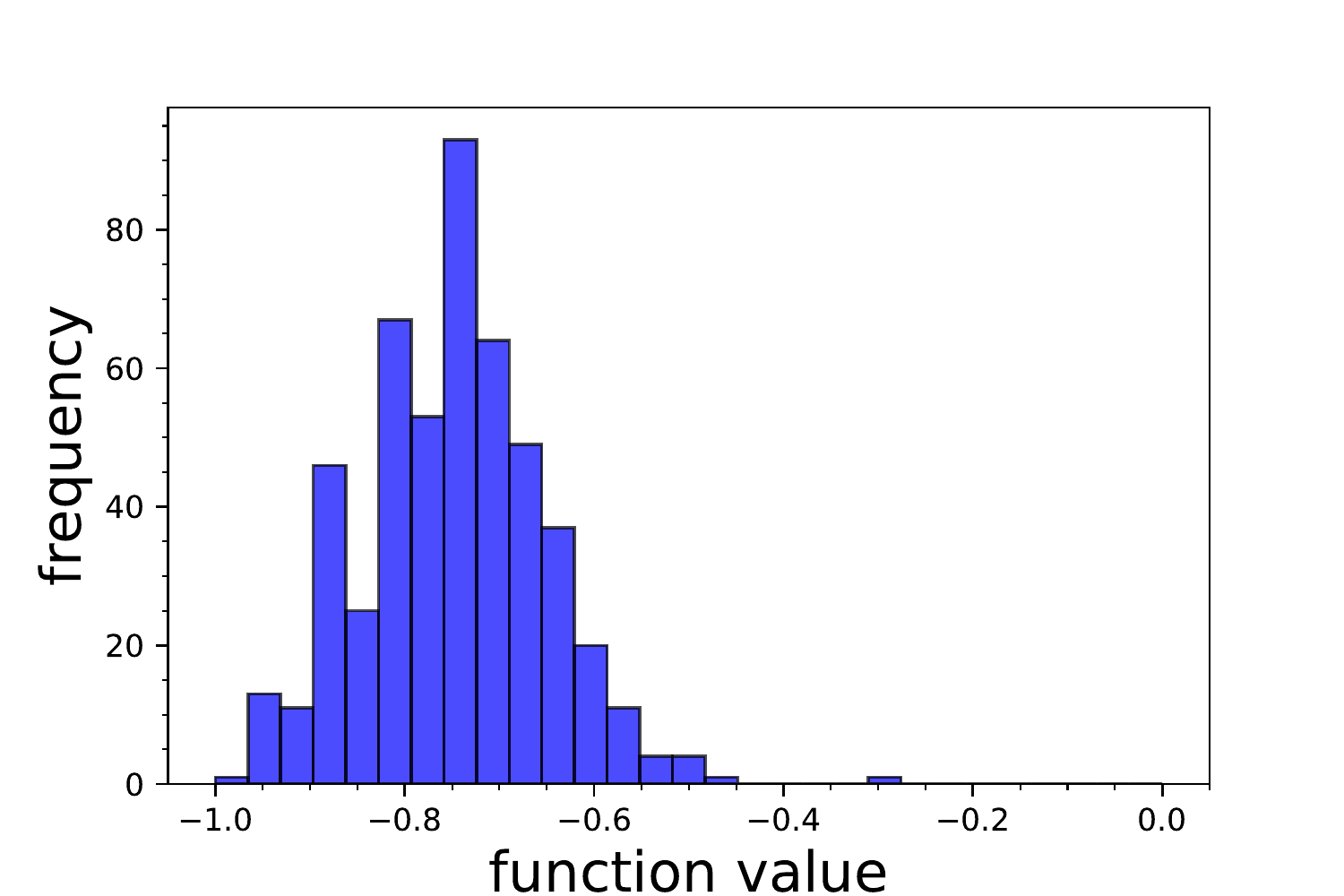}}\\
    \subfigure[$\beta=4$]{
        \includegraphics[width=0.33\textwidth]{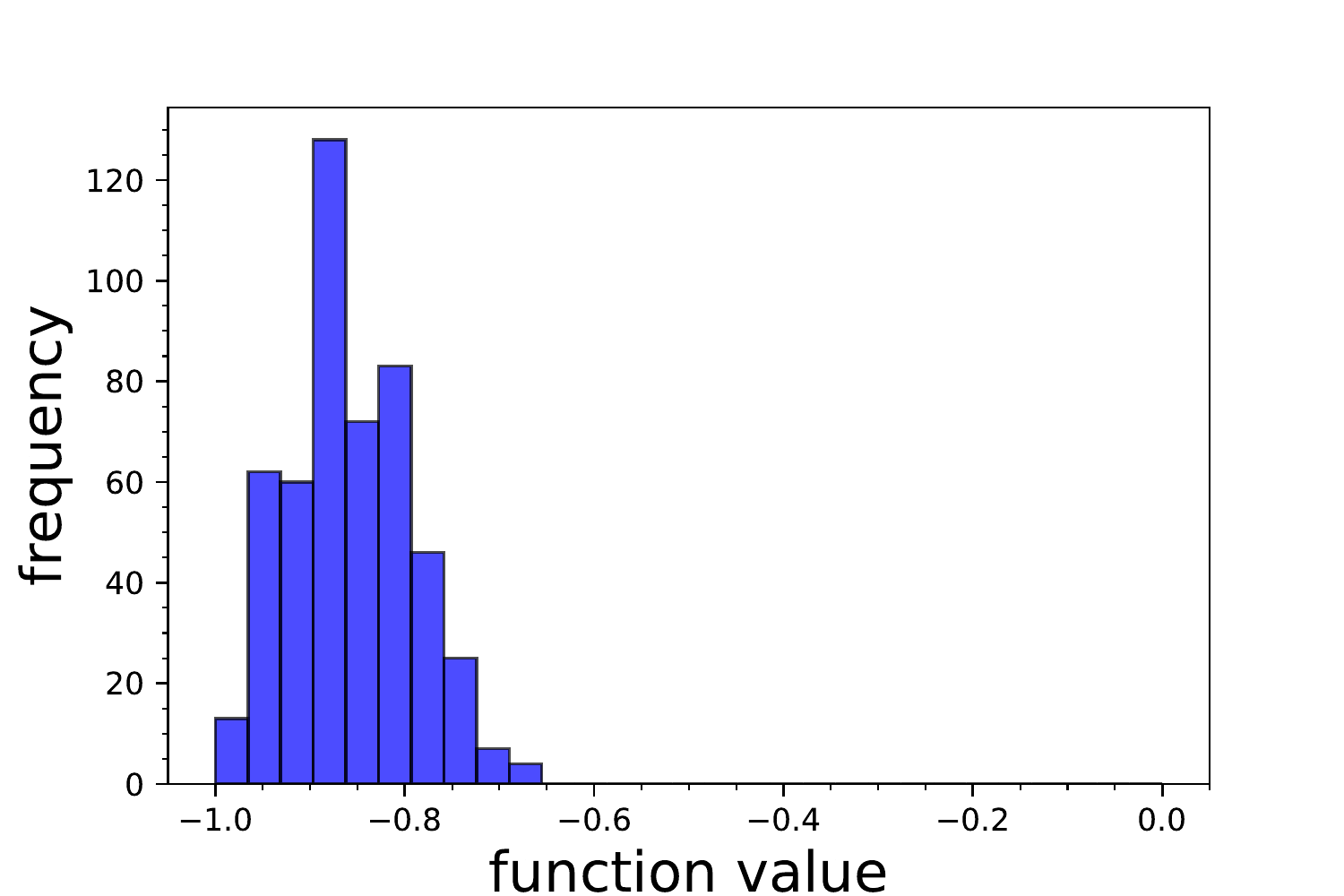}}
    \subfigure[$\beta=10$]{
    \includegraphics[width=0.33\textwidth]{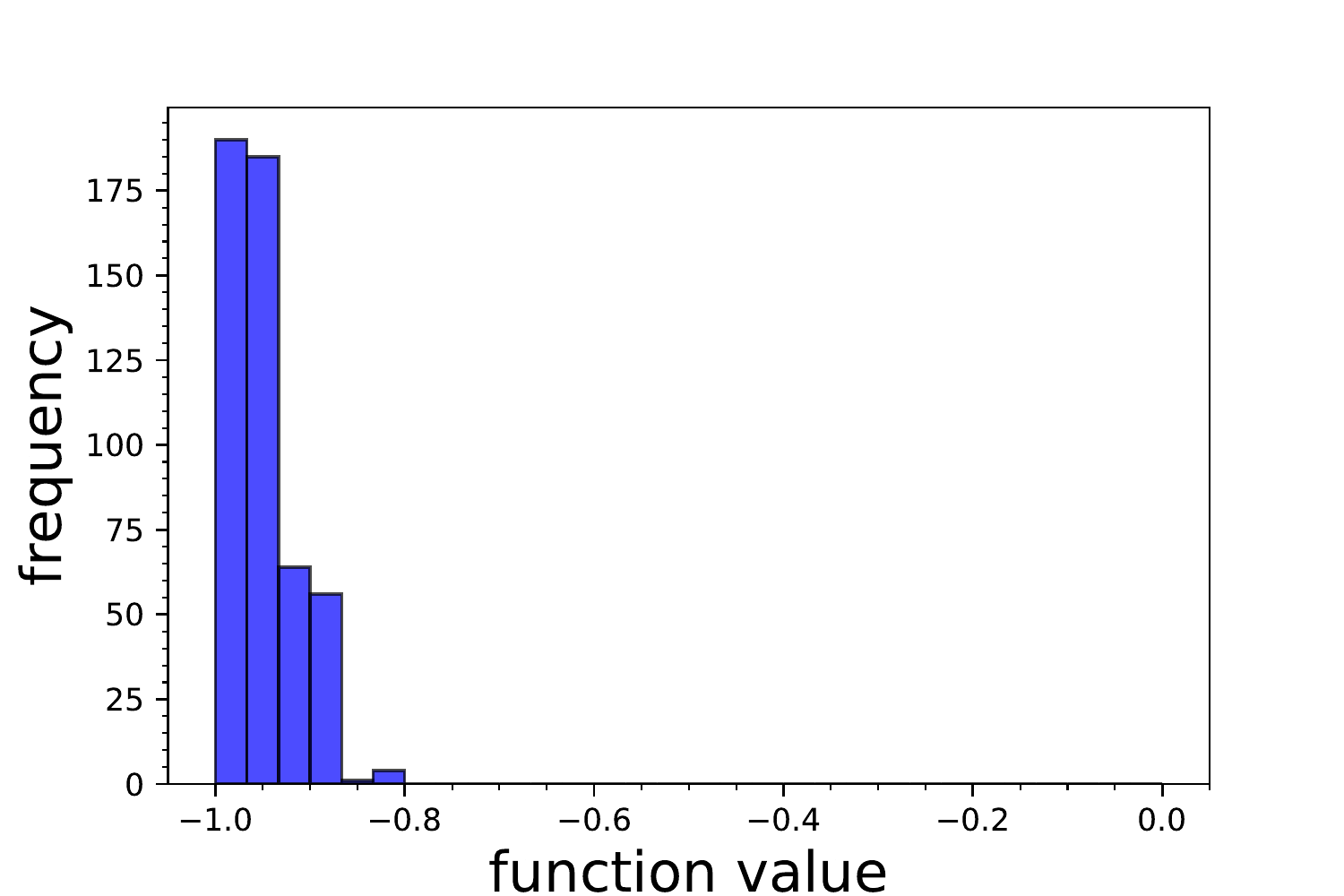}}
    \caption{Histogram of convergent function values (divided by  $39.165$) of ST-function}
    \label{exp1}
\end{figure}

\subsection{Gaussian Mixture Density}
We study the problem of finding the largest mode of a Gaussian mixture density using kernel density estimaiton. 
In particular, the objective function
\[    
F(\theta)=\E_{X \sim \xi}\Big[ {(2\pi\sigma)^{-d/2}}\cdot \exp\Big\{-\frac{\|\theta-X\|^2}{2\sigma^2} \Big\} \Big].
\]
We assume $\xi$ is a Gaussian mixture distribution, that is, 
$X\sim\mathcal{N}(m_i,\sigma^2I_d)$ with probability $p_i$ for $1\leq i\leq M$, where $\mathcal{N}(m_i,\sigma^2I_d)$ denotes the Gaussian distribution with mean vector $m_i$ and covariance matrix $\sigma^2I_d$; the mixing weights $p_i$ satisfy $0<p_i<1$ and $\sum_{i=1}^M p_i=1$. When $m_i$'s are well-separated, $F(\theta)$ has multiple local minima located near $m_i$. Hence, the selection of initial point is critical to optimize $F(\theta)$. 

We first consider a lower dimensional example with $d=5$ and $M=10$.  We implement SIPS, OIPS-annealing, and OIPS-SAO, all with $n=50$ and $\beta=10$. ULA is used to draw $L=1000$ samples from $\pi_\beta(\theta)$. Given the initial point, GD is used to optimize $F(\theta)$. Moreover, to evaluate the gradient, we draw a batch $(X_1,\ldots,X_{1000})$ from $\xi$ and approximate  $\nabla F(\theta)$ via batch means. In the optimization phase, GD is run for $20$ iterations and the objective value at the last iteration is taken as the convergent value. 
Again, $500$ independent replications of the algorithm are implemented in each setting. Figure \ref{exp2} shows the distribution of convergent function values under different algorithms  (number in bracket: success probability $\Pb(\cF^c_0)$). We observe that SIPS and OIPS outperform random start significantly. SIPS and OIPS-SAO perform better than OIPS-annealing with SIPS performs the best as measured by the probability of convergent function values smaller than $-32$. However, OIPS-annealing is the easiest and cheapest to implement in practice.
Figure \ref{exp2.5} further illustrates the success probability for different values of $n$ in OIPS-annealing and SIPS. We observe that there is a diminishing return in the outsourcing sample size. The sample sizes that are larger than $50$ in OIPS-annealing or even $30$ in SIPS lead to similar performances.

We also consider a higher dimensional example with $d=30$ and $M=20$. We focus on OIPS-annealing versus random start because of the relatively low computational cost of OIPS-annealing.  We adopt the same hyper parameters as above.  Figure \ref{exp2_2} presents the results. We observe again that OIPS outperforms random start significantly.

\begin{figure}[ht]
    \centering 
    \subfigure[Random start]{ 
    \includegraphics[width=0.33\textwidth]{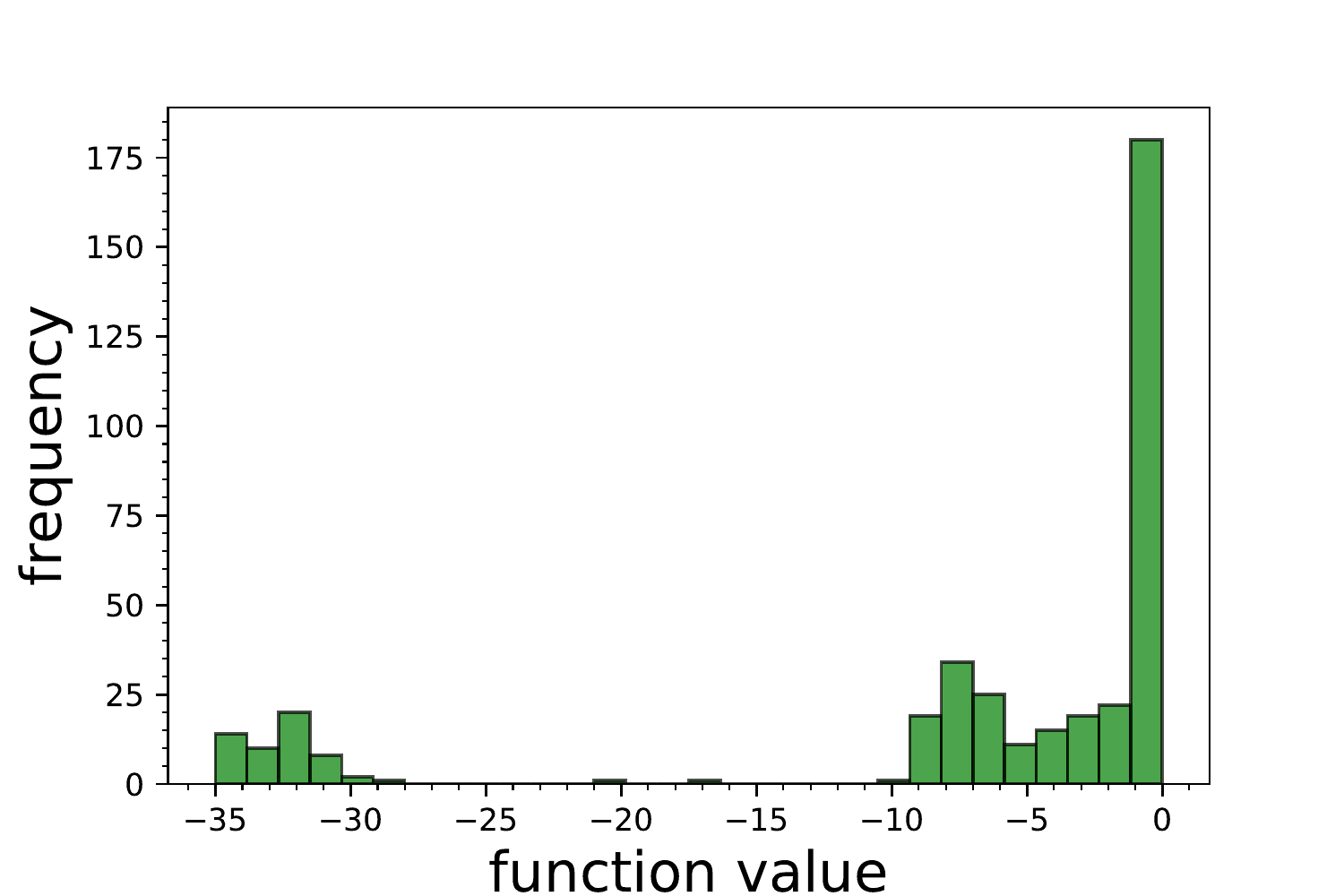}}
    \subfigure[SIPS]{ 
    \includegraphics[width=0.33\textwidth]{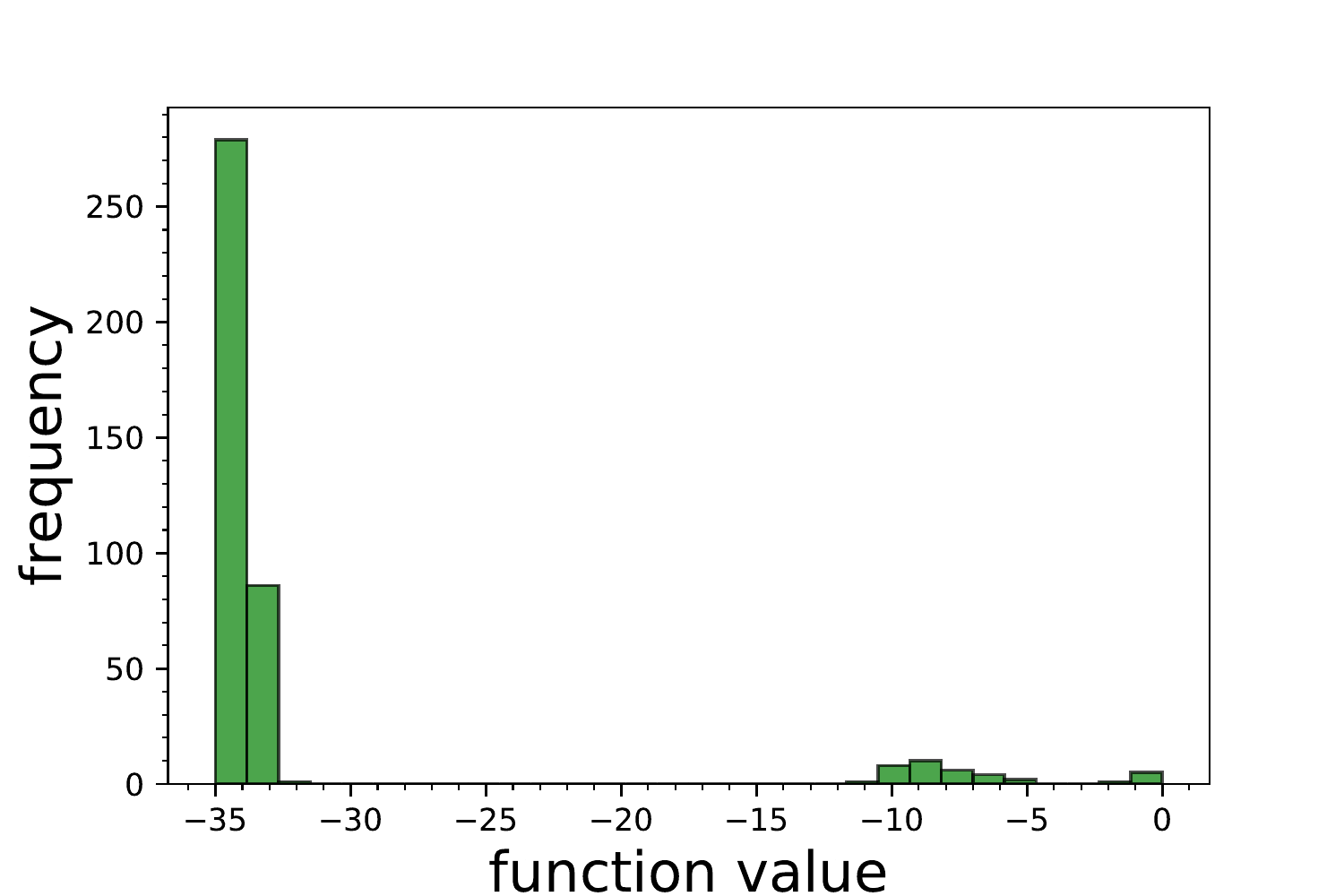}}\\
    \subfigure[OIPS-annealing]{ 
    \includegraphics[width=0.33\textwidth]{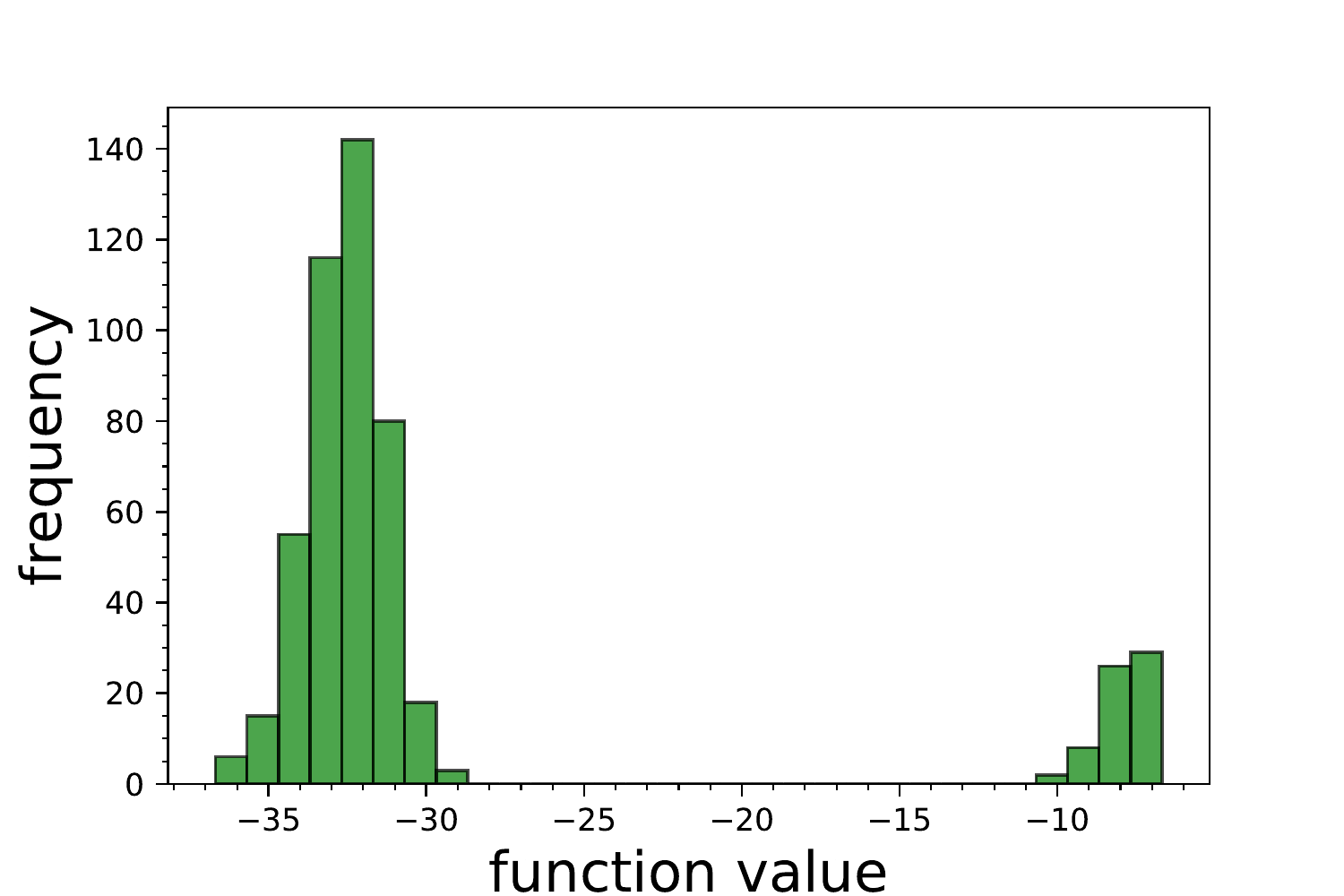}}
    \subfigure[OIPS-SAO]{ 
    \includegraphics[width=0.33\textwidth]{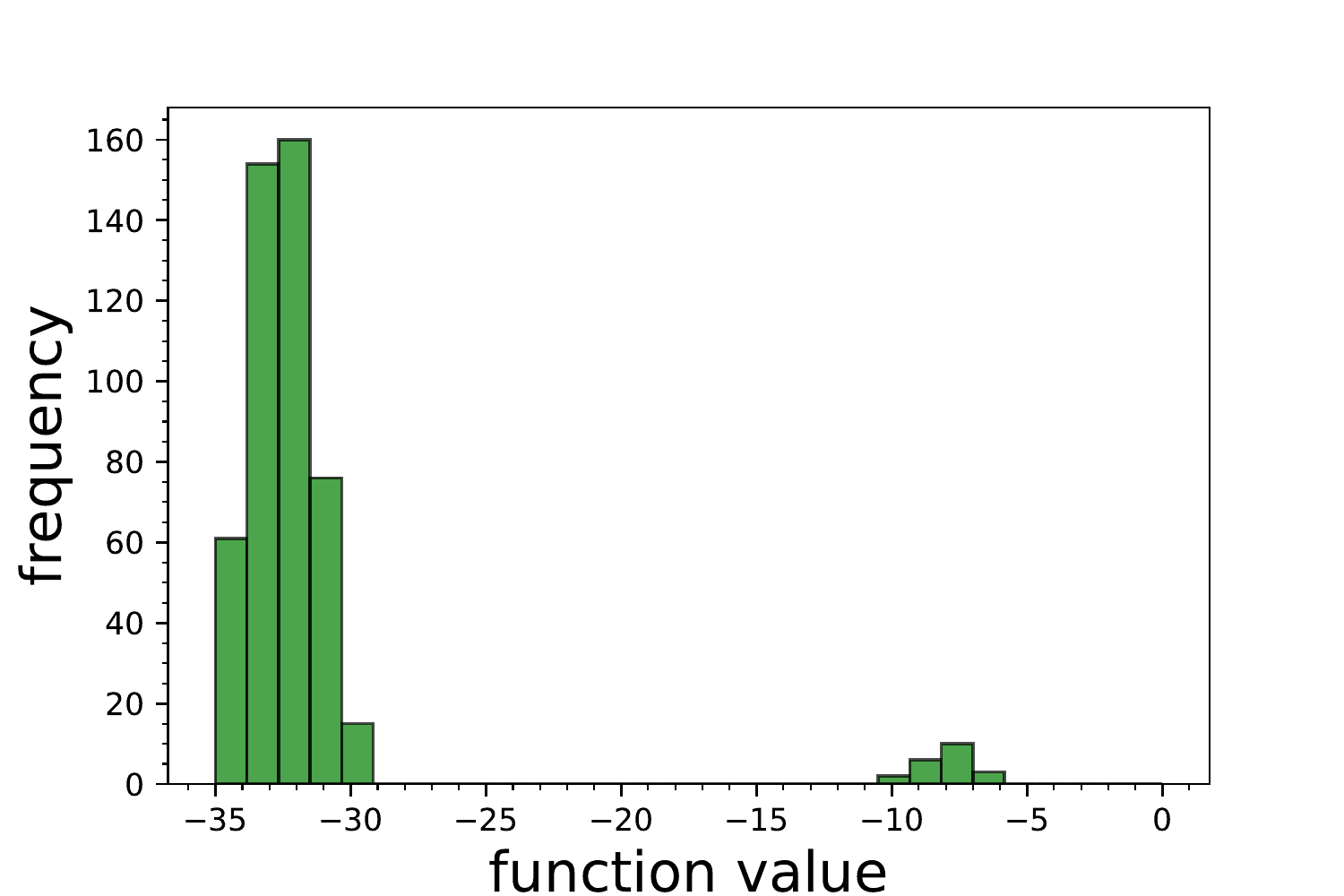}}
    \caption{Histogram of convergent function values of mixture Gaussian density $(d=5, M=10)$.}
    \label{exp2}
\end{figure}

\begin{figure}[ht]
  \subfigure[SIPS]{
        \includegraphics[width=0.33\textwidth]{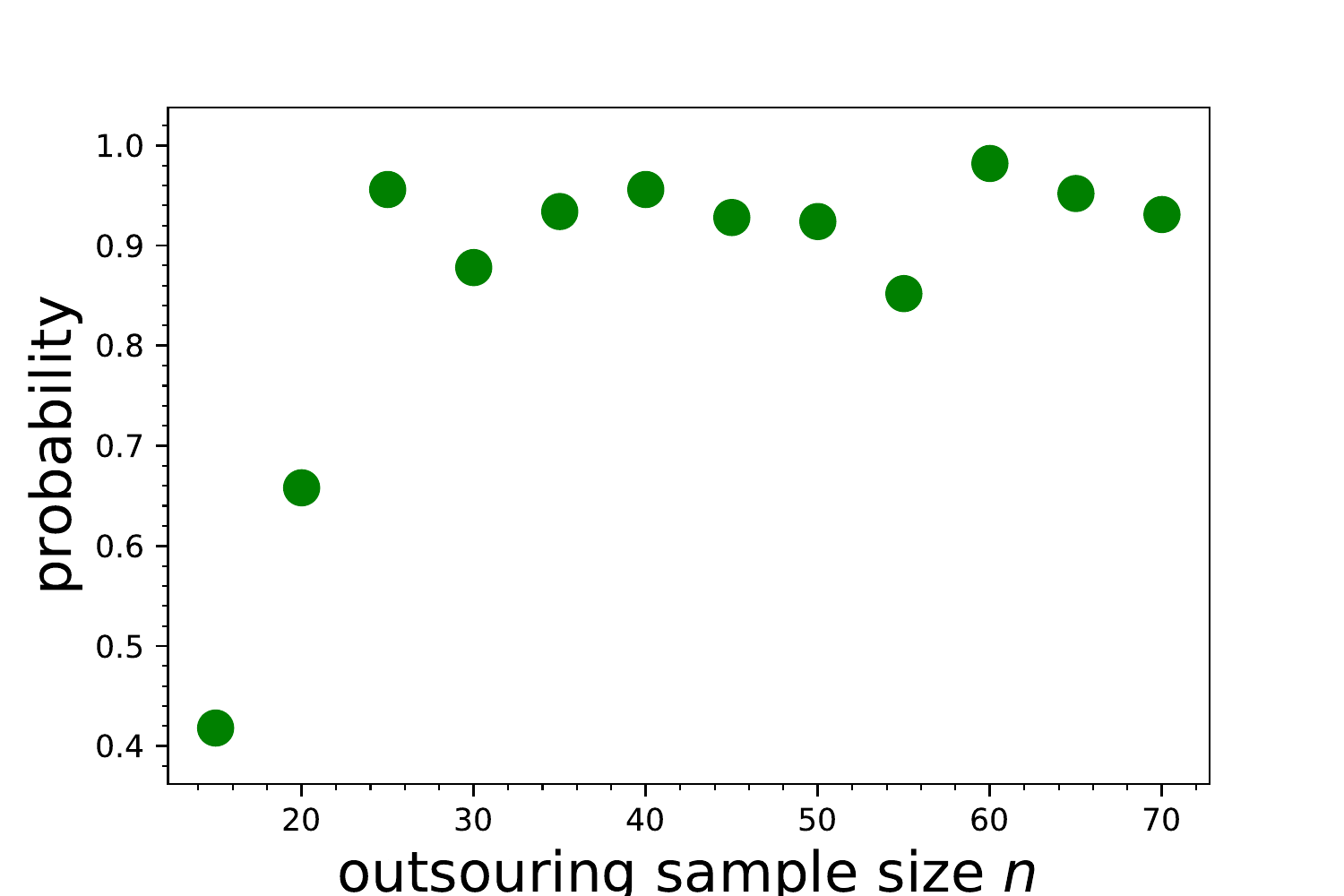}}
  \subfigure[OIPS-annealing]{
          \includegraphics[width=0.33\textwidth]{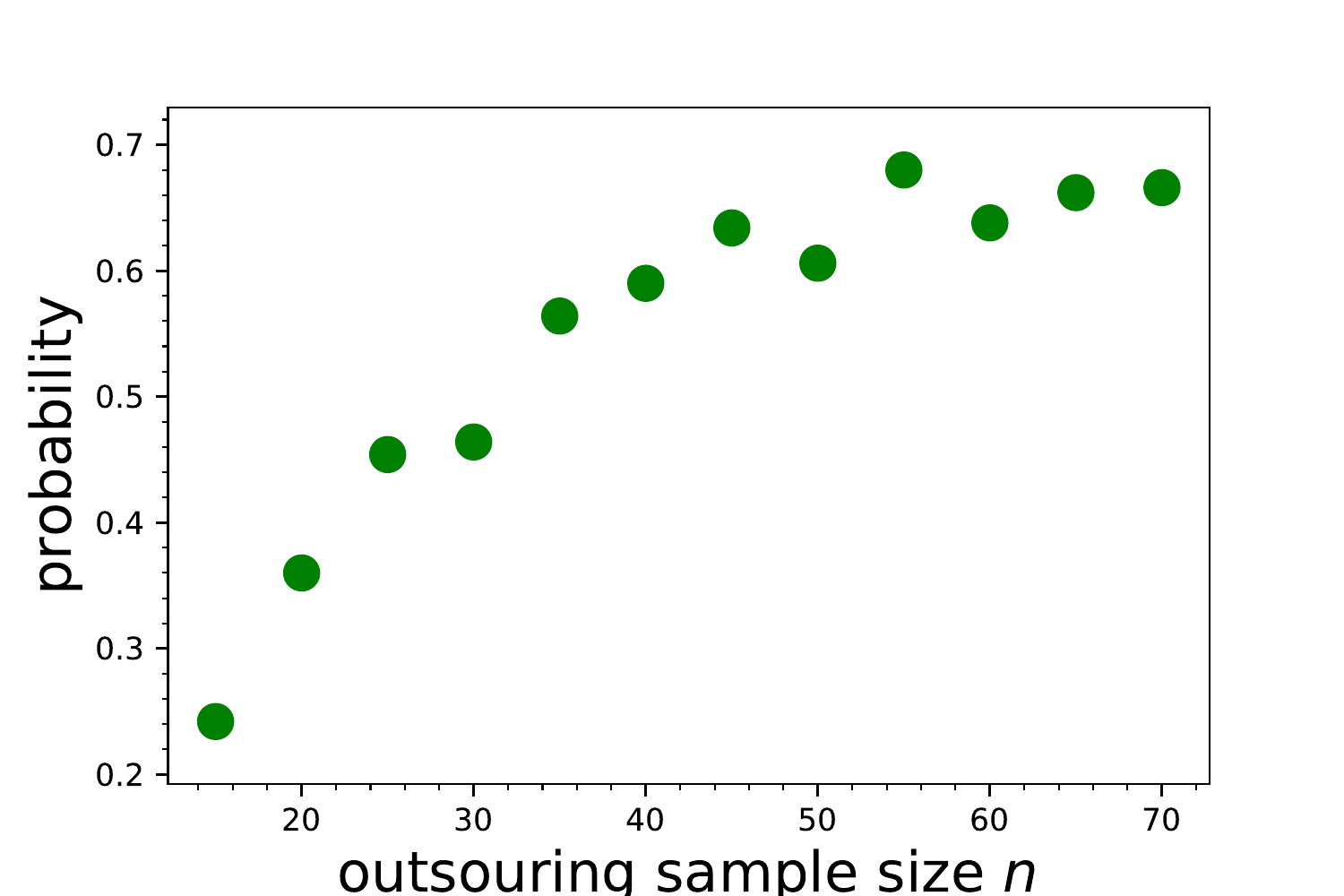}}      
        \caption{Probability of finding the global minimum for different outsourcing sample sizes in OIPS-annealing and SIPS}
         \label{exp2.5}
\end{figure}

\begin{figure}[ht]
\subfigure[Random start]{
    \includegraphics[width=0.33\textwidth]{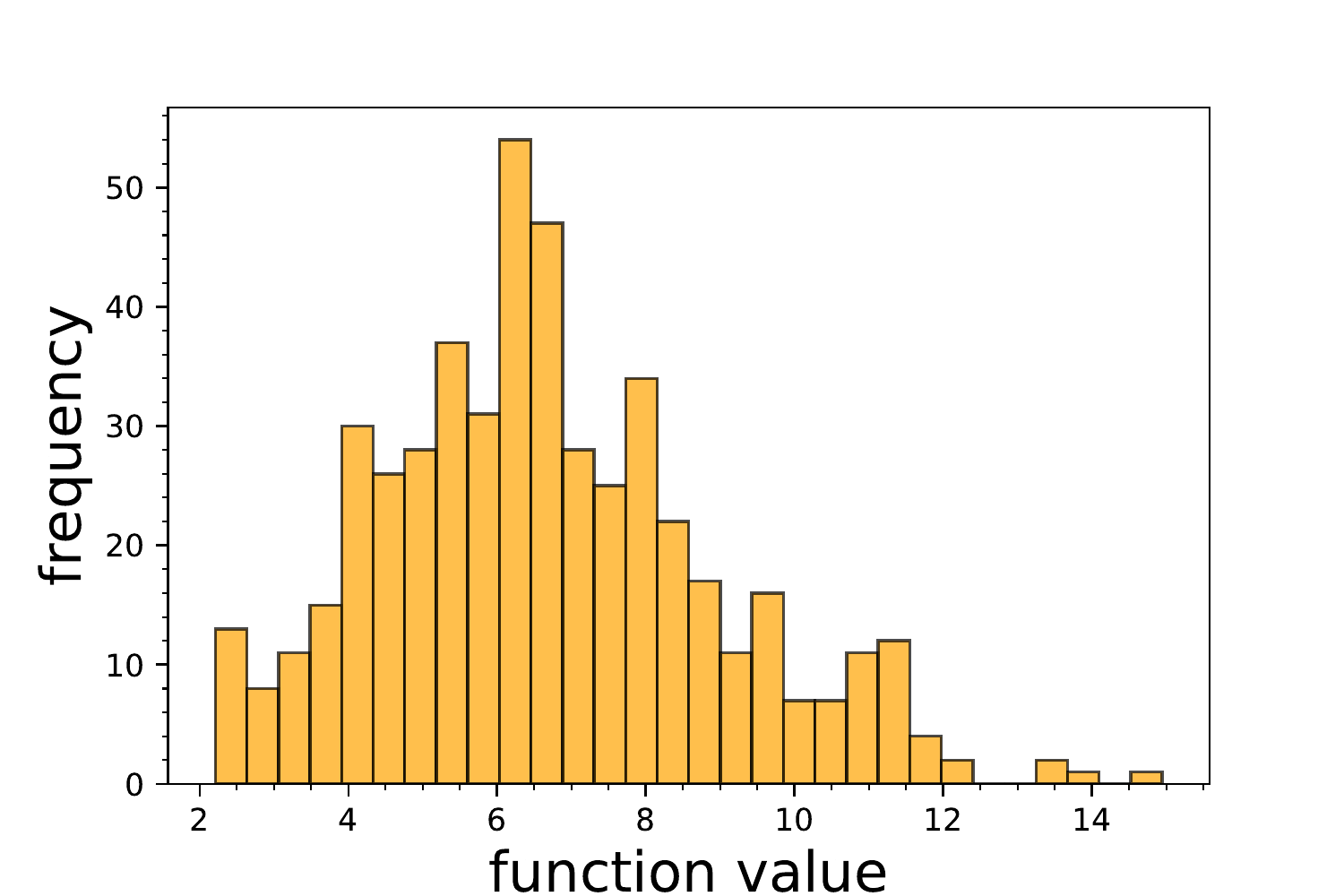}}
    \subfigure[OIPS-annealing]{
    \includegraphics[width=0.33\textwidth]{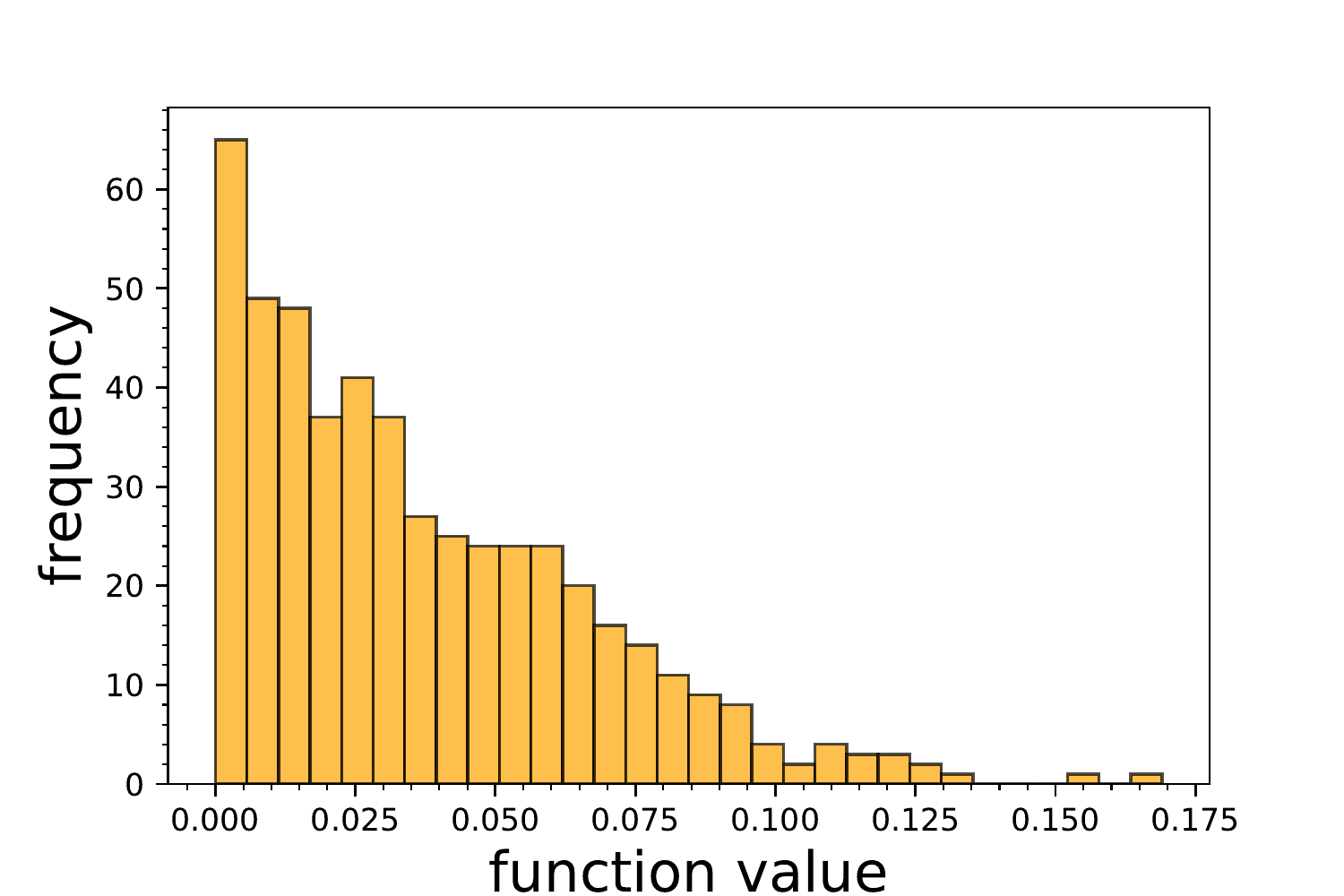}}
    \caption{Histogram of convergent function values of mixture Gaussian density $(d=30, M=20)$.}
    \label{exp2_2}
\end{figure}
    
%
%

\subsection{Generalized multinomial logit model}
We study an application of our algorithms for maximum likelihood estimation of the generalized multinomial logit (GMNL) model. Multinomial logit model is a classic model to study consumer choice. As an extension, the GMNL model accommodates the scaling heterogeneity in utility coefficients through an individual-specific scaling factor \citep{fiebig2010generalized}. Such a generalization makes the negative log-likelihood function nonconvex. In practice, GD or BFGS with random starts are employed for the estimation \citep{train2009discrete}. 

Suppose that there are $N$ customers who make a choice from $J$ alternatives. The utility that customer $n$ chooses alternative $j$ is 
$U_{nj}=x^\top_{j}\phi_n+\epsilon_{nj}$,
where $x_{j}$ is a $p$-dimensional vector of attributes of product $j$, $\phi_n \in \mathbb{R}^p$ is the vector of utility coefficients, and $\epsilon_{nj}$ is an idiosyncratic error term that follows standard Gumbel distribution.  The customer tends to choose products with higher utilities and the probability that $k$ is chosen is 
$P_{nk}={\exp(x^\top_{k}\phi_n)}/ {\sum_{j=1}^J \exp(x^\top_{j}\phi_n)}$. 
The GMNL model specifies $\phi_n$ as $\phi_n=\exp\{ z^\top_n\psi+\xi_n\}\cdot \phi$,
where $z_n$ is a $q$-dimensional vector of agent characteristics, $\psi$ is a $q$-dimensional hetogeneity coefficient, and $\xi_n$ is an independent random shock that follows the standard Gaussian distribution. Let the binary variable $y_{nj} \in \{0,1\} $ denote whether customer $n$ chooses product $j$. 
Then the likelihood of customer $n$'s choice  is  
\[
L_n=\mathbb{E}_{\xi_n \sim \cN(0,1)} \Big[ \prod_{k=1}^J \Big(\frac{\exp(x^\top_{k}\phi_n)}{\sum_{j=1}^J \exp(x^\top_{j}\phi_n)}\Big)^{y_{nk}} \Big].
\]
We use simulation to approximate the above expectation. The model parameter $\theta=(\phi, \psi)$ can be estimated by maximizing the simulated negative log-likelihood function
\begin{align} \label{SLR}
F(\theta)=-\frac{1}{N}\sum_{n=1}^N \log\Big( \frac{1}{R} \sum_{r=1}^R \prod_{k=1}^J \Big(\frac{\exp(x^\top_{k}\phi^{[r]}_n)}{\sum_{j=1}^J \exp(x^\top_{j}\phi^{[r]}_n)}\Big)^{y_{nk}} \Big),
\end{align}
where $\phi^{[r]}_n= \exp\{ z^\top_n\psi+\xi^{[r]}_n\}\cdot \phi  $ is the $r$-th draw from the distribution of $\phi_n$ and $R$ is the total number of draws. 

In our simulation experiment, we consider an instance with dimension parameters $p=10,q=5$, and $J=5$ alternatives. We generate $N=1000$ customers. In particular, we set the true parameter $\phi^*=(1,\ldots,1,-1,\ldots,-1)$ and $\psi^*=(1,\ldots,1)$ and generate product attributes $x_j$ and agent characteristics $z_n$ from standard Gaussian distribution. Then we simulate the agents' choices following the GMNL model and obtain choice data $y_{nj}$.  Based on the simulated dataset $\{x_j,z_n,y_{nj}\}_{1\le n\le N, 1\le j \le J}$, we use gradient descent to optimize the negative log-likelihood \eqref{SLR} with $R=100$. We compare the performance of OIPS-annealing algorithm with random start. For OIPS-annealing, we set the outsourcing sample size $n=200$ and the inverse temperature $\beta=1$. ULA is applied to draw $L=500$ samples as candidate initial points. In the optimization phase, GD is run for $100$ iterations.

Figure \ref{exp4} shows the distribution of convergent objective values (negative log-likelihood). We note that SIPS, OIPS-SAO, and OIPS-annealing again outperform the random start significantly. Moreover, SIPS and OIPS-SAO are performing slightly better than OIPS-annealing.

\begin{figure}[ht]
    \centering      
    \subfigure[Random start]{
    \includegraphics[width=0.33\textwidth]{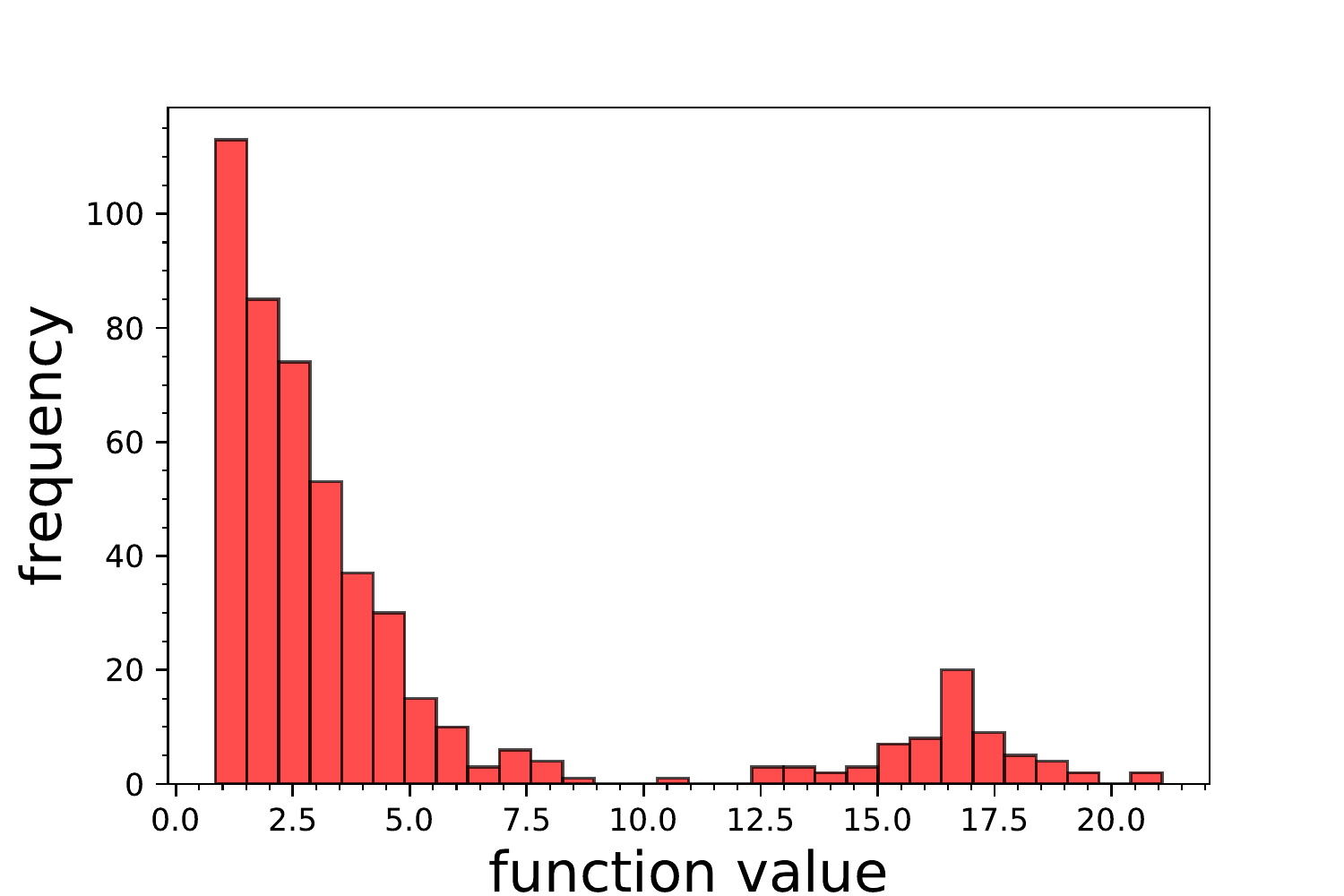}}
    \subfigure[SIPS]{
    \includegraphics[width=0.33\textwidth]{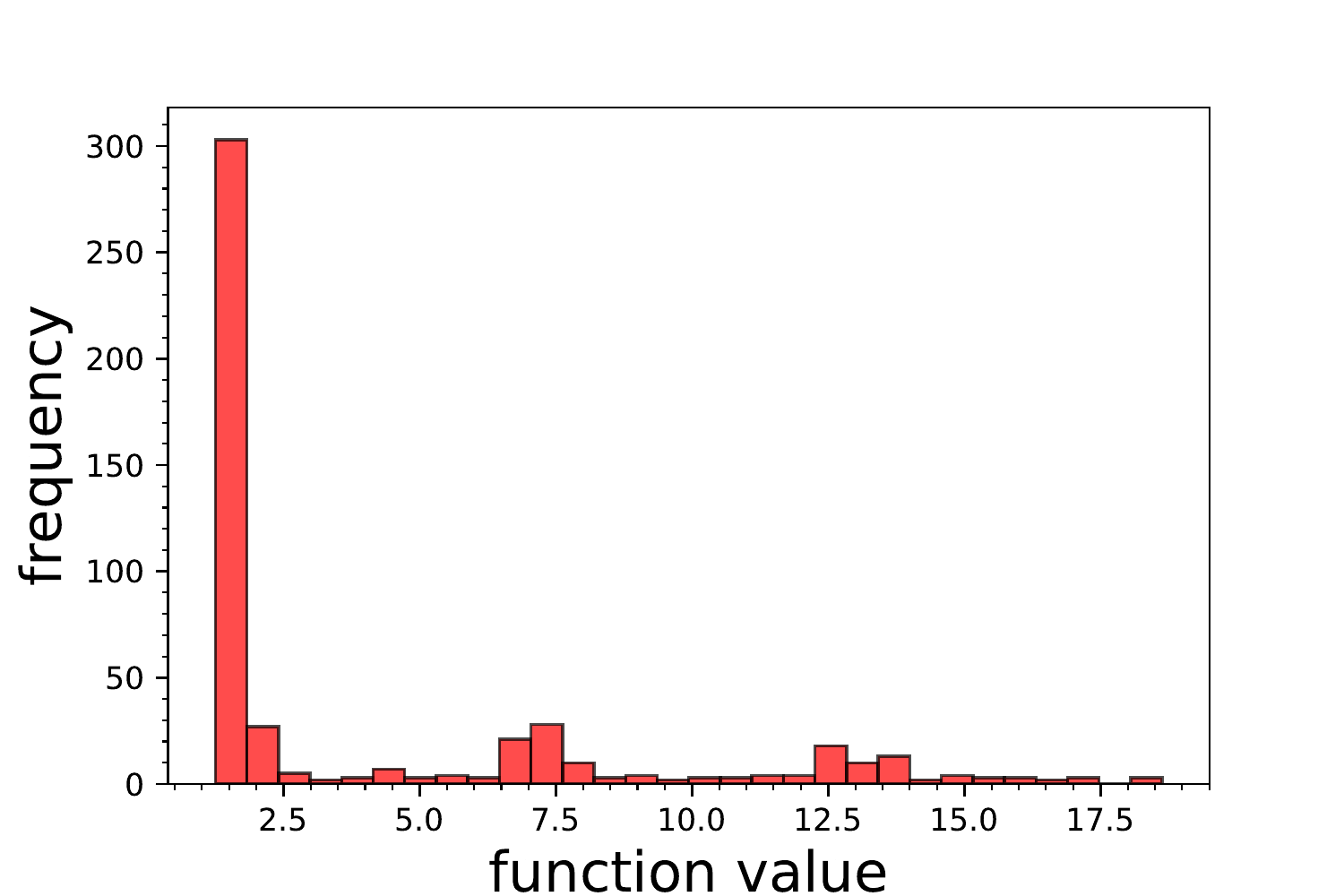}}\\
    \subfigure[OIPS-annealing]{
    \includegraphics[width=0.33\textwidth]{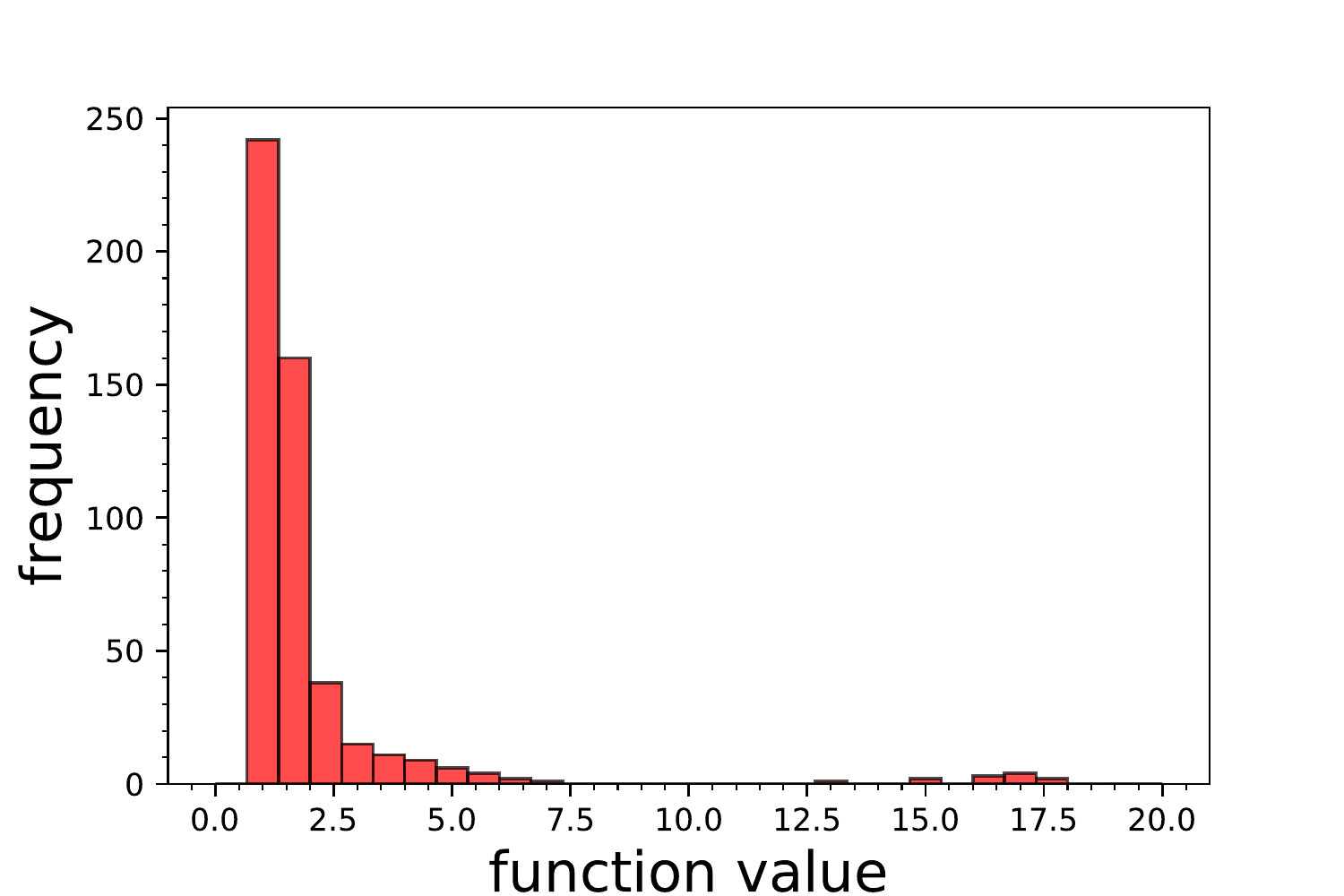}}
    \subfigure[OIPS-SAO]{
    \includegraphics[width=0.33\textwidth]{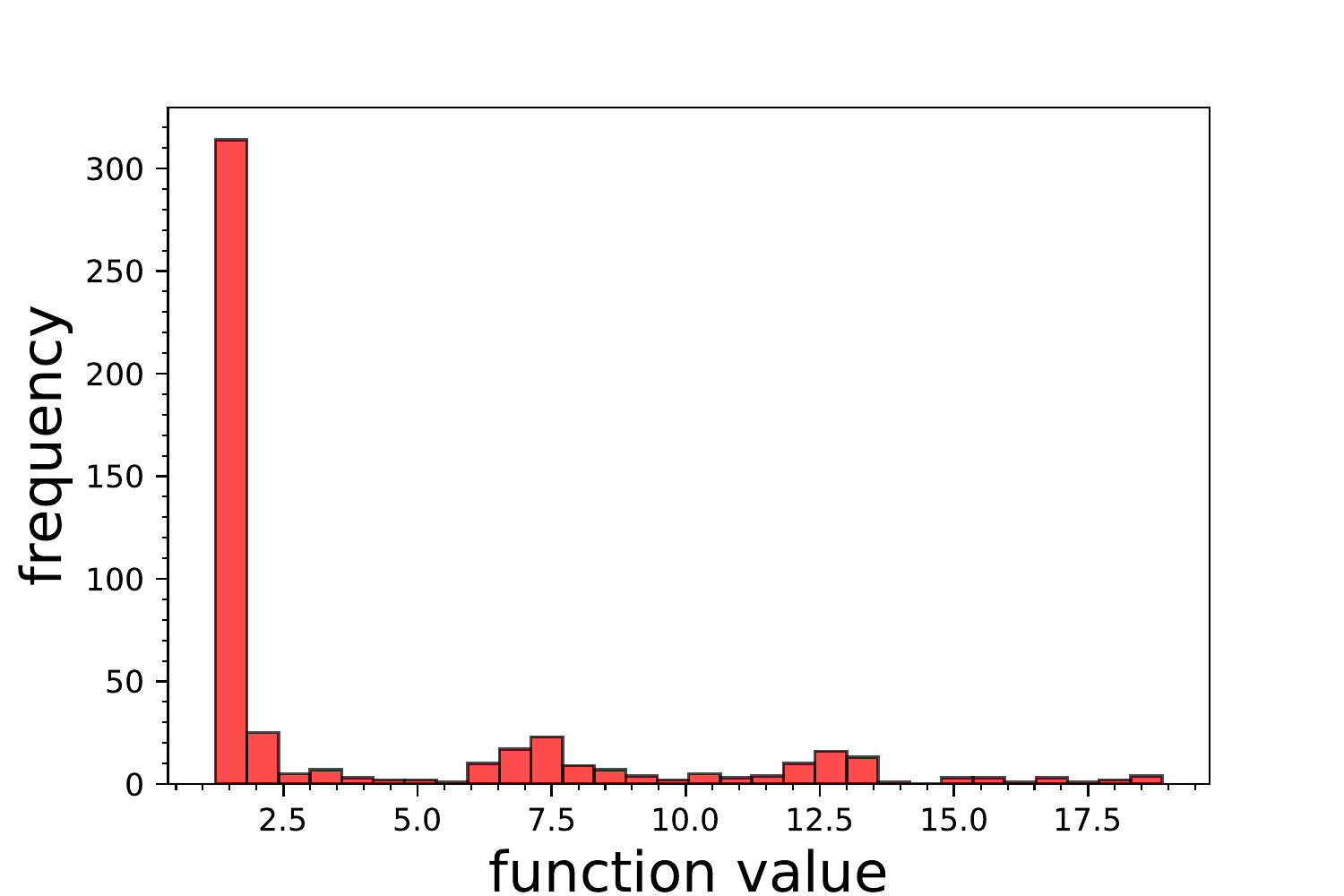}}
    \caption{Histogram of negative log-likelihood of GMNL model}
    \label{exp4}
\end{figure}
\section{Conclusion, Limitations, Future Works} 
We have designed three algorithms using outsourced data to find good initial points. They are better than the popular random start approach. In both theoretical analysis and numerical tests, the SIPS and OIPS-SAO perform better than the OIPS-annealing, but they have computational costs in general. 

Our work has the following two limitations, which can be seen as possible future directions. 1) We assume the outsourced data is drawn randomly from the true population. In practice, such data might be from a biased distribution or need additional privacy encryption.  2) Our analyses focused on the large $\beta$ scenario. In practice, we would prefer to use a moderate $\beta$ due to sampling complexity.


\bibliographystyle{ACM-Reference-Format}
\bibliography{ref}

\appendix

\section{Technical verifications} \label{sec:proof}

\subsection{Approximation accuracy of  $\hat{F}(\theta)$ and data complexity} \label{app:approximation}

\begin{proof}[Proof of Lemma \ref{lem1}]
    To prove the results about gradient and Hessian convergence, we can apply Theorem 1 in \cite{mei2018landscape} directly. Specifically, under Assumptions \ref{aspt:morse} and \ref{ass_c0}, when $n>Cd\log(d)$, with probability at least $1-\rho$, we have 
    \begin{equation}  \label{ineq1}
    \begin{split}
        &\sup_{\theta  \in \Theta} \| \nabla F(\theta)-\nabla \hat{F}_n(\theta) \| \leq \tau \sqrt{\frac{Cd\log(n)}{n}},\\
        &  \sup_{\theta  \in \Theta} \| \nabla^2 F(\theta)-\nabla^2 \hat{F}_n(\theta) \|_{\text{op}} \le \tau^2 \sqrt{\frac{Cd\log(n)}{n}}.
    \end{split}
    \end{equation}    
    
    For the stationary points convergence, based on Theorem 2 in \cite{mei2018landscape}, under Assumptions \ref{aspt:morse} and \ref{ass_c0}, when $n \ge 4Cd\log(n) \cdot ((\tau^2/\sigma^2)\vee(\tau^4/\eta^2))$, the empirical loss function $\hat{F}_n(\theta)$ is $(\sigma/2,\eta/2)$-strongly Morse and possesses $K+1$ stationary points with probability at least $1-\rho$. Furthermore, there is a one-to-one correspondence between $(\theta^*_0,\ldots,\theta^*_K)$, the stationary points of $F(\theta)$, and $(\hat{\theta}^*_0,\ldots,\hat{\theta}^*_K)$, the stationary points of $\hat{F}_n(\theta)$. 
    Moreover, when $n\ge 4Cd\log(n)/\eta^2_*$, 
    \begin{align} \label{ineq2}
    \max_{0\le i \le K}\|\theta_i^*-\hat{\theta}_i^*\|  \le \frac{2\tau}{\eta}\sqrt{\frac{Cd\log(n)}{n}}.
    \end{align}

    It remains to establish the uniform convergence result for $\Fhat_n$. Although it is not directly available in \cite{mei2018landscape}, the proof follows a similar idea. For self-completeness, we provide the details here. 
    
    First of all, given the parameter space $\Theta$, let $\Theta_\varepsilon:=\{\theta_1,\ldots,\theta_J\}$ be a $\varepsilon$-covering net. In other words, for arbitrary $\theta \in \Theta$, there exists certain $ \theta_{j(\theta)} \in  \Theta_\varepsilon$ such that $\|\theta-\theta_{j(\theta)}\|\le \varepsilon$. Thus, for any $\theta \in \Theta$, we have 
    \begin{align} \label{chain_ineq}
    \big|\hat{F}_n(\theta)-F(\theta)\big| \le \big|\hat{F}_n(\theta)-\hat{F}_n(\theta_{j(\theta)})\big|+\big|\hat{F}_n(\theta_{j(\theta)})-F(\theta_{j(\theta)})\big|+\big|{F}(\theta)-F(\theta_{j(\theta)})\big|.
    \end{align}
    For any $t>0$, we denote by 
    \begin{align*}
        A_t&=\Big\{ \sup_{\theta\in \Theta} \big|\hat{F}_n(\theta)-\hat{F}_n(\theta_{j(\theta)})\big|\ge t/3  \Big\}, 
        ~ B_t=\Big\{\sup_{\theta_j \in \Theta_\varepsilon} \big|\hat{F}_n(\theta_j)-{F}(\theta_j)\big|\ge t/3  \Big\},\\
        &\text{and}\ 
        C_t=\Big\{\sup_{\theta\in \Theta} \big|F(\theta)-F(\theta_{j(\theta)})\big|\ge t/3  \Big\}.
    \end{align*}
    Then we have 
    \begin{align*}
        \Pb\Big(\sup_{\theta \in \Theta}\big|\hat{F}_n(\theta)-F(\theta)\big| \ge t\Big)\le \Pb(A_t)+\Pb(B_t)+\Pb(C_t).
    \end{align*}
    In the next, we upper bound the three parts in above inequality respectively. For the last part, we have 
    \[
    \big| F(\theta)-F(\theta_{j(\theta)}) \big| \le \sup_{\theta\in \Theta} \| \nabla F(\theta) \| \cdot \|\theta-\theta_{j(\theta)} \| \le L^* \cdot \varepsilon. 
    \] 
    Hence, when $ t \ge 3\varepsilon L^*$, the deterministic event $C_t$ would never happen and $P(C_t)=0$. For the second part, under Assumption \ref{ass_c0}, by applying the union bound and the sub-Gaussian concentration inequality, we have 
    \begin{align*}
    \Pb(B_t) &\le \big | \Theta_\varepsilon \big|\cdot P\Big(\big|\hat{F}_n(\theta_j)-{F}(\theta_j)\big|\ge t/3\Big) \\
    & \le | \Theta_\varepsilon |\cdot \exp\big\{-nt^2/(18\tau^2)\big\} \le (2/\varepsilon)^d\cdot \exp\big\{-nt^2/(18\tau^2)\big\}.
    \end{align*}
    Thus, when 
    \[
    t> 5\tau\cdot \sqrt{\frac{\log(2/\rho)+d\log(2/\varepsilon)}{n}},
    \]  
    we have 
    $  \Pb(B_t)\le \rho/2. $
    For the first part, by Markov inequality, we have 
    \[
        \Pb(A_t) \le \frac{3\E\big[\sup_{\theta\in \Theta} |\hat{F}_n(\theta)-\hat{F}_n(\theta_{j(\theta)})|\big]}{t} \le \frac{3\varepsilon\cdot \E\big[\sup_{\theta\in \Theta} \|\nabla \hat{F}_n(\theta)\|\big]}{t}.
    \]
    By Assumption \ref{aspt:morse}, we have
    \begin{align*}
        &\E\Big[\sup_{\theta\in \Theta} \|\nabla \hat{F}_n(\theta)\|\Big] \le  \E\Big[\sup_{\theta\in \Theta} \|\nabla \hat{F}_n(\theta)-\nabla \hat{F}_n(\theta^*)\|\Big] + \E\big[ \| \nabla \hat{F}_n(\theta^*)\|\big] \le 2J^*+H, 
    \end{align*}
    which implies that 
    \[
    \Pb(A_t) \le  3\varepsilon(2J^*+H)/t.
    \]
   Taking $t\ge 6\varepsilon(2J^*+H)/\rho$, we have 
    $ P(A_t) \le \rho/2$. 
    
    Finally, by taking 
    \[\varepsilon^*=\rho\tau/(6dn(2J^*+H)), ~~ t^*=5\tau\sqrt{(\log(2/\rho)+d\log(2/\varepsilon))/n},\] 
    and utilizing the fact that $H \le \tau^2 d^{c_h}, J_*\le \tau^3 d^{c_h}$, when $n \ge Cd\log(d)$, we have 
    \[\Pb\left(\sup_{\theta \in \Theta}\big|\hat{F}_n(\theta)-F(\theta)\big|\ge \tau\sqrt{\frac{Cd\log(n)}{n}}\right)\le \rho.\] 
    
    Now, given an approximation accuracy $\delta$, we calculate the minimal required sample size. 
    For arbitrary positive constant $\iota$, there exists an absolute constant $C_\iota$ such that  $\log(n)\le C_\iota \cdot n^\iota$. As a result, when 
    $$
    n \ge \max\Big\{\Big[ \frac{C_\iota Cd}{(\delta/((2\tau/\eta)\vee\tau \vee \tau^2))^2} \Big]^{\frac{1}{1-\iota}}, 4Cd \big(\log(d)\vee \log(n)/\eta^2_*\big)\Big\}, 
    $$
    we have 
    \begin{align*}
        &\sup_{\theta  \in \Theta} |F(\theta)-\hat{F}_n(\theta)|\leq \delta,\  \sup_{\theta  \in \Theta} \| \nabla F(\theta)-\nabla \hat{F}_n(\theta) \|\leq \delta,\\
        &\sup_{\theta  \in \Theta} \| \nabla^2 F(\theta)-\nabla^2 \hat{F}_n(\theta) \|_{\text{op}} \le \delta,\ \mbox{ and }  \max_{0\le i \le K}\|\theta_i^*-\hat{\theta}_i^*\|  \le \delta.
   \end{align*}
   with probability at least $1-\rho$.  
\end{proof}

\subsection{Performance analysis of the sampling approach}

\begin{proof} [Proof of Proposition \ref{lem2}] 
    First note that when $\hat{F}_n(\theta)$ is a $\delta$-approximation, for any $\theta \notin \cB_r(\theta^*_0)$, by Assumption \ref{ass1} we have 
    \begin{align*}
       \hat{F}_n(\theta)-\hat{F}_n({\theta}^*_0)  \ge  \big( {F}(\theta)-\delta \big) -  \big({F}({\theta}^*_0)+\delta \big)  \ge \alpha-2\delta >0.
    \end{align*} Hence, by the definition of $\pi_\beta$, we have 
    \begin{align*}
    \Pb\Big(\tilde\theta_{\beta}\in\cB_r(\theta^*_0)\Big)& =\frac{\int_{\cB_r(\theta^*_0)} \exp(-\beta\hat{F}_n(\theta))d\theta}{\int_{\cB_r(\theta^*_0)} \exp(-\beta\hat{F}_n(\theta))d\theta+\int_{\Theta/\cB_r(\theta^*_0)} \exp(-\beta\hat{F}_n(\theta))d\theta} \\
    &=\frac{\int_{\cB_r(\theta^*_0)} \exp(-\beta[\hat{F}_n(\theta)-\hat{F}_n({\theta}^*_0)])d\theta}{\int_{\cB_r(\theta^*_0)} \exp(-\beta[\hat{F}_n(\theta)-\hat{F}_n({\theta}^*_0)])d\theta+\int_{\Theta/\cB_r(\theta^*_0)} \exp(-\beta[\hat{F}_n(\theta)-\hat{F}_n({\theta}^*_0)])d\theta}\\
    &\ge \frac{\int_{\cB_r(\theta^*_0)} \exp(-\beta[\hat{F}_n(\theta)-\hat{F}_n({\theta}^*_0)])d\theta}{\int_{\cB_r(\theta^*_0)} \exp(-\beta[\hat{F}_n(\theta)-\hat{F}_n({\theta}^*_0)])d\theta+\exp(-\beta(\alpha-2\delta))\cdot\text{Vol}(\Theta/\cB_r(\theta^*_0))},
    \end{align*}
    where $ \text{Vol}(\Theta/\cB_r(\theta^*_0))$ denotes the volume of set $\Theta/\cB_r(\theta^*_0)$. 
    
    On the other hand, based on the regularity condition of Hessian  and the definition of $\delta$-approximation, we have $ \|\nabla^2\hat{F}_n(\theta)\|_{\text{op}} \le H+L^*+\delta$. As a result, for any $\theta \in \cB_r(\theta^*_0)$, 
    \begin{align*} 
    \hat{F}_n(\theta)-\hat{F}_n({\theta}^*_0) \le 2(H+L^*+\delta)\cdot  \big(  \|\theta-{\theta}^*_0\|^2\big).
    \end{align*}
    Hence, 
    \begin{align*}
        & \int_{\cB_r(\theta^*_0)} \exp\big(-\beta[\hat{F}_n(\theta)-\hat{F}_n({\theta}^*_0)] \big)d\theta 
        \ge   \int_{\cB_r(\theta^*_0)} \exp\big(-2\beta(H+L^*+\delta)\|\theta-\theta^*_0 \|^2\big)d\theta \\
        =& \Big( {\pi\beta^{-1}}/{(H+L^*+\delta)}\cdot \big(\Psi(2r\sqrt{\beta(H+L^*+\delta))})-1/2\big) \Big)^d, 
    \end{align*}
    where $\Psi(\cdot)$ denotes the CDF of standard normal distribution. Note that when $\beta \ge \Omega(r^{-2})$, $$ \Psi(2r\sqrt{\beta(H+L^*+\delta))})-1/2=O(1).$$   
    Then,  
    \[
    \int_{\cB_r(\theta^*_0)} \exp\big(-\beta[\hat{F}_n(\theta)-\hat{F}_n(\hat{\theta}^*_0)] \big)d\theta=O \Big( \big(\beta^{-1}/(H+L^*+\delta)\big)^d \Big).
    \] 
    As a result,   
    \[
       1/ \Pb\big(\tilde\theta_{\beta}\in\cB_r(\theta^*_0)\big) = O\Big({1+\exp\big\{ -\beta\big(\alpha-2\delta \big) \big \}\big/\big(\beta^{-1}/(H+L^*+\delta)\big)^d}\Big),
     \]
    which further implies that
    \[
    1-\Pb\Big(\tilde\theta_{\beta}\in\cB_r(\theta^*_0)\Big)=O\Big( \exp\big\{ -\beta\big(\alpha-2\delta \big) \big \}\big/ \big(\beta^{-1}/(H+L^*+\delta)\big)^d  \Big).
    \]
    Finally, by setting $\delta=\alpha/4$, we obtain the result.  
\end{proof}

%
%
    \begin{proof}[Proof of Lemma \ref{approximation_sampling}]
    Let $X_1,\ldots,X_L$ be samples from $\hat{\cM}$.     
    \begin{align*}
    \Pb(X_1\notin B,\ldots, X_L\notin B)
    &=\E \Big[ \prod^L_{i=1}1_{(X_i\notin B)}\Big]\\
    &=\E \Big[ \prod_{i=1}^{L-1}1_{(X_i\notin B)}\cdot \E_{L-1}\big[1_{X_L\notin B}\big]\Big]\\
    &=\E \Big[\prod_{i=1}^{L-1}1_{(X_i\notin B)}\cdot \pihat_{X_{L-1}} (B^c)\Big]\\
    &\leq \E \Big[\prod_{i=1}^{L-1}1_{(X_i\notin B)}\cdot  )(\pi_\beta(B^c)+\delta_\beta) \Big]\\
    &=(\pi_\beta(B^c)+\delta_\beta)\cdot \Pb(X_1\notin B,\ldots, X_{L-1}\notin B).
    \end{align*}
    By induction, we have
    \[
    \Pb(X_1\notin B,\ldots, X_L\notin B)\leq (\pi_\beta(B^c)+\delta_\beta)^L.
    \]
    \end{proof}
%
\begin{proof}[Proof of Theorem \ref{thm0}]
We use $\cI_n(\delta)$ to denote the random event that $\hat{F}_n(\theta)$ is a $\delta$-approximation of $F(\theta)$. First, based on Proposition \ref{lem2}, $\Pb(\cI^c_n(\delta))\leq \rho$ for $n\ge n(\delta,\rho,d)$. Then, 
\[
\Pb(\cF_0)-\rho\leq \Pb\big(\cF_0\cap I_n(\delta)\big)\leq \Pb\big(\cF_0|I_n(\delta)\big).
\]
By the definition of $\delta$-approximation, conditional on $\cI(\delta)$,  if at least one of $(\theta_1,\cdots,\theta_L)$ falls into $\cB_r(\theta^*_0)$, $\cF_0$ would not happen. Hence, by Lemma \ref{approximation_sampling}, we have 
$$  
\log\big(\Pb(\cF_0|\cI_n(\delta))\big)\leq  L\cdot \log \big(\pi_\beta(\cB^c_r(\theta^*_0))+\delta_\beta\big)
\leq L\cdot \max\big\{\log \big(2\pi_\beta(\cB^c_r(\theta^*_0)), \log(2\delta_\beta)\big\}.
$$
Finally, based on Lemma \ref{lem2}, when $\beta \ge \Omega(r^{-2})$,
\begin{align*}
    \log \big(\pi_\beta(\cB^c_r(\theta^*_0))\big)=O\big(-\beta\big(\alpha-2\delta \big) -d \log \beta\big).
\end{align*}
If we set $\delta=\alpha/4$, the above upper bound leads to  
\[
\Pb(\cF_0)\leq \rho+\exp \big(-CL\cdot \max\big\{-\beta\alpha/2+d\log(\beta),\log(2\delta_\beta)\big\}\big).
\]
for some constant $C>0$, and we finish the proof. 
\end{proof}

\subsection{Performance analysis of the optimization approaches} 
In this section, we establish the performance guarantee of the optimization approach, i.e., Algorithm \ref{alg:optimization}. 
We first analyze the sample selection rule with the SAO approach, which set $\theta_{*}^0=\hat\cT(\theta_{i^*}^0)$
where $i^*=\text{argmin}_{1\leq i\leq L} \big\{  \hat{F}_n(\hat{\cT}(\theta_{i}^0))\big\}.$ 
\begin{proof} [Proof of Theorem \ref{thm1}] 
Under Assumption \ref{ass_ab}, $F(\theta)$ is $\mu$-strongly convex in $\cB_r(\theta^*_0)$. When $\delta<\mu$ and $\hat{F}_n(\theta)$ is a $\delta$-approximation, we have 
\[
\sup_{\theta \in \Theta} \|\nabla^2 F(\theta)- \nabla^2 \hat{F}_n(\theta)\|_{\text{op}} \le \delta \mbox{ and } \|\hat{\theta}^*_0-\theta^*_0\| \le \delta.
\]
This implies that $ \hat{F}_n(\theta)$ is $(\mu-\delta)$-strongly convex in $\cB_r(\theta^*_0)$ and $\hat{\theta}^*_0$ is the unique minimum of  $\hat{F}_n(\theta)$ in $\cB_r(\theta^*_0)$. Hence, starting from any $\theta \in \cB_r(\theta^*_0)$, the optimization algorithm $\hat{\cT}$ can converge to $\hat{\theta}^*_0$, which implies that $$\mathbb{P}(\hat{\cT}(\tilde{\theta}_\beta)\ne \hat{\theta}^*_0) \le \mathbb{P}(\tilde{\theta}_\beta \notin \cB_r(\theta^*_0)).$$ Then by Proposition \ref{lem2}, we can establish the upper bound for the probability of $\hat{\cT}(\tilde{\theta}_\beta)\ne \hat{\theta}^*_0$. Finally, note that when at least one of $(\theta_1,\theta_2,\ldots,\theta_L)$ falls into $\cB_r(\theta^*_0)$, the global minimum of $\hat{F}_n(\theta)$, $\hat{\theta}^*_0 \in \cB_r(\theta^*_0)$, can be found by $\hat{\cT}$ and would also be selected as the initial point to optimize $F(\theta)$. As a result, $\cF_1\subset \cF_0$. Thus, by Theorem \ref{thm0}, we prove the upper bound for $ \PP(\cF_1)$.  
\end{proof}

\begin{proof}[Proof of Theorem \ref{thm2}]
    We first show that if at least one point of $(\theta_1,\ldots,\theta_L)$ is in $\cB_{r_0}(\theta^*_0)$, the annealing approach would select a point that falls into $\cB_{r_0}(\theta^*_0)$. When $\hat{F}_n(\theta)$ is a $\delta$-approximation of $F(\theta)$, if $\theta_i \in \cB_{r_0}(\theta^*_0)$, we have 
    \begin{align*}
        \hat{F}_n(\theta_i) \le F(\theta_i) +\delta & \le  F(\theta^*_0) +\delta+\frac{1}{2}r_0^2 \cdot  \sup_{\theta \in \Theta}\|\nabla^2 F(\theta)  \|_{\text{op}} 
        \\& \le  F(\theta^*_0) +\delta+\frac{\alpha}{2} \\
        & \le \min_{j\ne i} F(\theta_j)-\frac{\alpha}{2} +\delta \mbox{ by Assumption \ref{ass1}}
        \\ & \le \min_{j\ne i} \hat{F}_n(\theta_j) \mbox{ as $\delta \le \alpha/4$}. 
    \end{align*}
Hence, if the algorithm selects some $\theta_j \ne \theta_i $, we must have $\hat{F}_n(\theta_j) \le \hat{F}_n(\theta_i)$, which implies that $ \theta_j$ is in $\cB_{r_0}(\theta^*_0)$ as well. The remaining proof follows exactly the same line of argument as that of Theorem \ref{thm0}.
\end{proof}

\subsection{Performance analysis for extension to $\epsilon$-Global Minimum}
\begin{proof}[Proof of Theorem \ref{thm3}]
For Algorithm \ref{alg:optimization}-annealing, note that if there is a sample $\theta_i \in \cB_{r_\epsilon}(\theta^*_i) \subseteq \cB_{\epsilon,r_\epsilon}$, then we have 
\begin{align*}
\hat{F}_n(\theta_i) & \le F(\theta_i)+\delta \\
& \le F(\theta^*_i)+ \sup_{\theta \in \Theta} \| \nabla^2 F(\theta)\|_{\text{op}}\cdot r_\epsilon^2 +\delta \le  F(\theta^*_0)+ 2 \epsilon +\delta.
\end{align*}
So if we select some $\theta_j$ instead, then
$$
F(\theta_j) \le \hat{F}_n(\theta_j)+\delta \le \hat{F}_n(\theta_i)+\delta \le F(\theta^*_0)+ 2 \epsilon +2\delta.
$$
When $\delta=\epsilon/4$, by definition $\theta_j$ is a $3\epsilon$-global minimum. 

In the next, we estimate the probability that a sample drawn from $\pi_\beta$ falls into $\cB_{\epsilon,r_\epsilon}$. Note that for $\theta \notin \cB_{\epsilon,r_\epsilon}$ ,
$$
\hat{F}_n(\theta)-\hat{F}_n(\theta^*_0) \ge \epsilon-2\delta.
$$
Then we have 
\begin{align*}
\mathbb{P}\big( \tilde{\theta}_\beta \in \cB_{\epsilon,r_\epsilon}  \big)& =\frac{\int_{\cB_{\epsilon,r_\epsilon}} \exp(-\beta\hat{F}_n(\theta))d\theta}{\int_{\cB_{\epsilon,r_\epsilon}} \exp(-\beta\hat{F}_n(\theta))d\theta+\int_{\Theta/\cB_{\epsilon,r_\epsilon}} \exp(-\beta\hat{F}_n(\theta))d\theta} \\
&\ge \frac{\int_{\cB_{\epsilon,r_\epsilon}} \exp(-\beta[\hat{F}_n(\theta)-\hat{F}_n({\theta}^*_0)] )d\theta}{\int_{\cB_{\epsilon,r_\epsilon}} \exp(-\beta[\hat{F}_n(\theta)-\hat{F}_n({\theta}^*_0)])d\theta+\exp(-\beta(\epsilon-2\delta))\cdot\text{Vol}(\Theta/\cB_{\epsilon,r_\epsilon})} \\
& \ge \frac{\int_{\cB_{r_\epsilon}(\theta^*_0)} \exp(-\beta[\hat{F}_n(\theta)-\hat{F}_n({\theta}^*_0)] )d\theta}{\int_{\cB_{r_\epsilon}(\theta^*_0)} \exp(-\beta[\hat{F}_n(\theta)-\hat{F}_n({\theta}^*_0)])d\theta+\exp(-\beta(\epsilon-2\delta))\cdot\text{Vol}(\Theta/\cB_{\epsilon,r_\epsilon})}.
\end{align*}
Similar to the proof of Proposition \ref{lem2}, when $\beta\ge \Omega(r_\epsilon^{-2})$, we have    
\begin{align*}
    \int_{\cB_{r_\epsilon}(\theta^*_0)} \exp\big(-\beta[\hat{F}_n(\theta)-\hat{F}_n({\theta}^*_0)]\big) d\theta
    = O \Big(\big(\beta^{-1}/(H+L^*+\delta)\big)^d \Big).
\end{align*}
As a result, 
$$
1/\mathbb{P}\big( \tilde{\theta}_\beta \in \cB_{\epsilon,r_\epsilon}  \big) = O\Big({1+ \exp\big\{ -\beta\big(\epsilon-2\delta \big) \big \}\big/\big(\beta^{-1}/(H+L^*+\delta)\big)^d}\Big),
$$
which further implies that
\[
\Pb\Big(\tilde\theta_{\beta}\notin\cB_{\epsilon,r_\epsilon} \Big)=O\Big(1+\exp\big\{ -\beta\big(\epsilon-2\delta \big) \big \}\big/\big(\beta^{-1}/(H+L^*+\delta)\big)^d  \Big).
\]
When $\delta=\epsilon/4$, we have 
\[
\log\big( \Pb(\tilde\theta_{\beta}\notin\cB_{\epsilon,r_\epsilon})\big)= O\big( -\beta\epsilon/2+d\log (\beta) \big).
\]
Hence, with probability at least $O(\exp\{-\beta\epsilon/2+d\log (\beta)\} ) $, Algorithm \ref{alg:optimization} with subroutine 1 can find a $3\epsilon$-global minimum of $F(\theta)$.   

We use $\cI_n(\delta)$ to denote the random event that $\hat{F}_n(\theta)$ is a $\delta$-approximation of $F(\theta)$. Similar to the proof of Theorem \ref{thm0}, we have 
\begin{align*}
  \Pb(\cF_{3\epsilon,2})-\rho\leq \Pb\big(\cF_{3\epsilon,2}\cap I_n(\delta)\big)\leq \Pb\big(\cF_{3\epsilon,2}|I_n(\delta)\big).
\end{align*}
and 
\begin{align*}  
  \log\big(\Pb(\cF_{3\epsilon,2}|\cI_n(\delta))\big)\leq  L\cdot \log \big(\pi_\beta(\cB^c_{\epsilon,r_\epsilon})+\delta_\beta\big)
 \leq L\cdot \max\big\{\log \big(2\pi_\beta(\cB^c_{\epsilon,r_\epsilon}), \log(2\delta_\beta)\big\}.
 \end{align*}
Finally, in above proof, by replacing $\epsilon$ with $\epsilon/3$, we obtain the result.
\end{proof}

\end{document}